\tikzset{decorate sep/.style 2 args=
 {decorate,decoration={shape backgrounds,shape=circle,shape size=#1,shape sep=#2}}}
\algnewcommand{\algorithmicgoto}{\textbf{go to}}%
\algnewcommand{\Goto}[1]{\algorithmicgoto~\ref{#1}}%
\algnewcommand{\algoand}{\textbf{and }}
\algnewcommand{\algoor}{\textbf{or }}
\newenvironment{proof}{\noindent{\bf Proof:}}{$\hfill \Box$ \vspace{10pt}}
\newtheorem{prop}{Proposition}
\newtheorem{remark}{Remark}
\newtheorem{theorem}{Theorem}
\newtheorem{lemma}{Lemma}
\newtheorem{claim}{Claim}
\DeclareMathOperator*{\argmax}{arg\,max}
\newcommand{\reg}{$^{\tiny\mbox{\textregistered}}\ $}
\DeclareMathOperator{\sign}{sign}
\begin{document}
\begin{frontmatter}

\title{\LARGE \bf
{Reconfigurable formations of quadrotors on Lissajous curves for surveillance applications}}

\author[label1]{Aseem V. Borkar\corref{cor1}}
\ead{aseem@sc.iitb.ac.in}

\author[label1]{Swaroop Hangal\corref{cor2}}
\ead{swaroop.hangal@iitb.ac.in}
\author[label2]{Hemendra Arya}
\ead{arya@aero.iitb.ac.in}
\author[label1]{Arpita Sinha}
\ead{asinha@sc.iitb.ac.in}
\author[label1]{Leena Vachhani}
\ead{leena@iitb.ac.in}

\cortext[cor1]{Corresponding author}
\cortext[cor2]{Now with General Aviation, Bengaluru, India.}
\address[label1]{Interdisciplinary Programme in Systems and Control Engineering, Indian Institute of Technology Bombay, Mumbai 400076, India}
\address[label2]{Department of Aerospace Engineering, Indian Institute of Technology Bombay, Mumbai 400076, India}
\begin{keyword}
Lissajous curves \sep Multi-agent systems \sep Calculus of variations \sep Formation reconfiguration  \sep Area surveillance   \\

\end{keyword}

\begin{abstract}
  This paper proposes  trajectory planning strategies for  online reconfiguration of a multi-agent formation on a Lissajous curve. In our earlier work \cite{lisiros}, a multi-agent formation with constant parametric speed was proposed in order to address multiple  objectives such as repeated collision-free surveillance and guaranteed sensor coverage of the area with ability for rogue target detection and trapping.  This work addresses the issue of formation reconfiguration within this context. In particular, smooth parametric trajectories are designed for the purpose using calculus of variations. These trajectories have been employed in conjunction with a simple local cooperation scheme so as to achieve collision-free reconfiguration between different Lissajous curves. A detailed theoretical analysis of the proposed scheme  is provided. These surveillance and  reconfiguration strategies have also been validated through simulations  in MATLAB\reg for agents performing parametric motion along the curves, and by Software-In-The-Loop simulation for quadrotors. In addition, they are validated  experimentally with a team of quadrotors flying in a motion capture environment. 
\end{abstract}

\end{frontmatter}

\section{Introduction }
\label{sec_intro}
The autonomous area surveillance task  involves planning paths for a single/multiple autonomous agents with a limited sensing range. This could be done so as to  ensure that all points in an area of interest are viewed /sensed repeatedly in finite time, with guaranteed  detection of a target of interest in the area being monitored.  Multi-agent systems offer several advantages over single agent systems for repeated coverage and target detection tasks. The most significant the search and detection of a rogue element, and  reduction of the time required for a single agent, which is alleviated by the parallelism implicit in multi-agent implementations. Another notable advantage is the increased robustness due to redundancy. Some major application domains  are searching for threats \cite{targetsearch}, surveillance (\cite{cooppatrolling}, \cite{coopsurviellance}), and so on.
Some preliminary results in regard to the proposed strategy were presented in
\cite{lisiros}, where a trajectory plan was proposed for a multi-agent formation on a Lissajous curve in order to achieve the following objectives simultaneously:\vspace{-0.25cm}
\begin{enumerate}[label=  O\arabic*]\it
\item Complete and periodic coverage of the rectangular region of interest. \vspace{-0.25cm}\label{ob_1}
\item Collision-free patrolling for  agents having finite non-zero size with  agent speed bounded above by $V_{max}$.\vspace{-0.25cm}\label{ob_2}
\item Finite time entrapment and detection of a rogue element held at the center of the region.\label{ob_3}
\end{enumerate}
The prior work in \cite{lisiros} considered a non-cooperating group of agents on a Lissajous curve and exploited the geometric properties of the curve to meet the above objectives simultaneously. Also, in \cite{lisiros} a sufficient upper bound on the agent size was derived in order to guarantee collision-free motion of the multi-agent formation. 

In this paper, we extend the work of \cite{lisiros} to a cooperating reconfigurable multi-agent formation on Lissajous curves. Our proposed reconfigurable multi-agent formation guarantees smooth and collision-free trajectories for formation reconfiguration, considering the following  operations:
 $$\mbox{ 1) Agent addition, 2) Agent removal, 3) Agent replacement}.$$
 The key features of the proposed reconfiguration strategy are: \vspace{-0.25cm}
\begin{enumerate}
\item A  connected communication graph between the formation agents  having limited communication range for cooperation.\vspace{-0.25cm}
\item Parametric trajectories for smooth acceleration, deceleration and transitions between Lissajous curves.\vspace{-0.25cm}
\item Cooperative  assignment schemes for each reconfiguration operation to ensure  collision-free multi-agent transitions from one Lissajous curve to another.\vspace{-0.25cm}
\item Collision-free trajectories are  designed for agents having finite non-zero size satisfying the size bound derived in \cite{lisiros}. 
\end{enumerate}

The proposed strategy has potential applications to target search, repeated surveillance, monitoring and mapping of dynamic environments, area sweeping for cleaning, spraying, etc.

 The rest of this paper  is organized as follows:  Section \ref{litsurvey} recalls some related works. Section \ref{sec_prelims} discusses some preliminaries about Lissajous curves and the design of smooth trajectories using the calculus of variations approach. Section \ref{sec_surveillance} discusses the theoretical details of the proposed surveillance strategy  and gives a systematic algorithm to implement it, recalling some earlier work from \cite{lisiros}.  Section \ref{sec_recon} discusses the proposed formation reconfiguration strategies for addition, removal and replacement  of agents in the multi-agent formation proposed for the  surveillance strategy discussed in Section \ref{sec_surveillance}. Section \ref{sec_sim_expt} validates the proposed surveillance and reconfiguration strategies through simulations and experiment.  Section \ref{sec_conc} concludes the paper.

\section{Literature survey}\label{litsurvey}

The novelty of the work presented here is in simultaneously addressing multiple  surveillance objectives, while ensuring collision-free paths for the multiple agents having a finite non-zero size. In literature, several methods have been proposed  to address the aerial surveillance problem \cite{Leahy,Keller,Capitan,Saska,Hosseini}. \cite{Leahy} takes an approach based on temporal logic.  \cite{Keller} proposes a strategy based on parametrized curves using splines. In contrast, \cite{Capitan} takes a `planning' viewpoint based upon casting the problem as a partially observed Markov decision process (POMDP). \cite{Saska} have addressed the surveillance problem with a particle swarm optimization based approach. The article \cite{Hosseini} takes into account energy considerations leading to a distinctive estimation/optimization problem. Our proposed surveillance strategy is based on tracking a parametric curve, viz., the Lissajous curve, that lends itself to a clean analytic way of path planning with a priori guarantees in certain performance criteria.

While complete area coverage is subsumed in our objective, our scheme goes well beyond it towards surveillance and detection of rogue elements. Coverage by itself has been extensively researched as a stand-alone theme. We recall a few relevant works here. One of the approaches for single agent area coverage is cell decomposition (discussed in \cite{choset_book}) wherein an area is divided into cells and these cells are searched systematically using zig-zag scan patterns. Occupancy grid  based strategies studied in \cite{gridoffline}, use a distance transform to assign a specific numeric value to each free grid element starting from a `goal' point, and a pseudo-gradient descent approach to generate a coverage path from `start' to `goal'. The grid based algorithms such as Spiral Spanning Tree Coverage Algorithm \cite{STCpaper} and the Backtracking Spiral Algorithm \cite{BSApaper} ensure that the robot returns to its starting grid location after completion of the coverage task. Thus these algorithms can be used for repeated coverage tasks. Some of these approaches have also been extended to multi-agent scenarios (\cite{multiSTC_nonuni}, \cite{UAVmulti_polypart}). 
Another problem relevant to patrolling is target search and detection. Recent works in this direction include: gradient based strategies for multi-UAV search \cite{gradienttarget}, Voronoi partition based strategy using an uncertainty map \cite{voronoisearch}, and game theoretic search strategies \cite{gamesearch}. In comparison to these approaches, the proposed strategy gives deterministic guarantee on repeated complete coverage and target detection in finite time.

Closer in spirit to our approach are the works based on well defined geometric curves such as raster scanning zigzag paths used in cell based schemes  discussed in \cite{choset_book}, \cite{UAVmulti_polypart}, etc., and space filling curves such as the Hilbert curve for multi-agent coverage in \cite{spacefilmulti}, and for non-uniform priority based coverage in \cite{fractal} and \cite{sidharthhilbert}. These curves often involve sharp turns and require additional path planning to return to the `start' position to repeat the task. However, Lissajous curves have a simple parametric form and  an appropriate choice of describing variables results in a smooth periodic curve called a `non-degenerate Lissajous curve' (\cite{lisnondgen}) of prescribed mesh density within a rectangle of any dimensions. The parametric form of the Lissajous curve simplifies the expression for a moving reference point that can be tracked by a robot. The problem of optimal choice of a Lissajous curve for multi-agent persistent monitoring of 2-D spaces has been addressed in \cite{lissajous_multi}, where each agent is assigned a separate Lissajous curve. Unlike \cite{lissajous_multi}, the aim of this work is to explore the advantage of multiple agents following a single  curve so that each agent covers the entire area over the period of time.
This feature allows robustness against failure and fast coverage due to parallelism.
It also makes this approach suitable for surveillance tasks where each agent is equipped with different types of sensors.

In this work we also propose collision-free online reconfiguration trajectories for the  multi-agent formation on Lissajous curves proposed in \cite{lisiros}. We highlight some literature in broadly related areas with similar motivations regarding multi-agent formation  reconfiguration. \cite{vkdecentralised} have  proposed a decentralised trajectory planner which guarantees convergence of a multi-robot formation to a centralised trajectory. Additionally, they use a simple rule based assignment methodology for collision-free formation shape reconfiguration. In \cite{dandrea}, a quadrotor team is employed for building a structure. For this task, collision-free routes to shared resources, such as battery charging
stations, are computed using  reserved passing
lanes and reservation systems.

The problem of agent removal for recharging or refuelling in long endurance missions has been posed as a problem of scheduling and
goal-reassignment task in \cite{milp} and \cite{smilp}.  \cite{milp} present a heuristic method to solve a Mixed Integer Linear program (MILP) formulation in order to efficiently cover a set of targets with agent removal for recharging. In \cite{smilp}, this approach has been extended considering an energy aware optimisation objective with an initially uncertain energy expenditure model. 

In \cite{miqpvk} and \cite{miqp}, a Mixed Integer Quadratic Program (MIQP) formulation has been used to plan piecewise smooth  collision-free trajectories for formation reconfiguration. Of these, \cite{miqpvk} use integer constraints to enforce collision avoidance for formation reconfiguration of a heterogeneous team of quadrotors. In \cite{miqp} trajectories are computed for online substitution of quadrotors in  a multi-quadrotor formation.  In  \cite{milp, smilp, miqp} and \cite{miqpvk} the reported experiments have been  conducted by implementing the  proposed trajectory planners with appropriate optimisation solvers on a central computer which commands the robots in flight. 

Other recent approaches of formation reconfiguration and control include the Virtual Rigid Body abstraction in \cite{VRBmac} and the use of path planning algorithms based on sequential convex programming (SCP) in  \cite{SCPhow}. In \cite{VRBmac}, collision-free trajectories are obtained to maintain a fixed relative  quadrotor formation in a manoeuvre and also to switch between a sequence of quadrotor formations.  In \cite{SCPhow}, an incremental sequential convex programming algorithm has been proposed for  finding  feasible collision-free trajectories for quadrotor teams. These are computed in near real time on a central computer. 

Note that these works depend significantly more on centralised processing as compared to our proposed reconfiguration scheme.


\section{Preliminaries }
\label{sec_prelims}
Let the dimensions of rectangular environment be $2A\times 2B$ with $A,B \in \Bbb{R}$. We consider the Lissajous curves with parametric equation
\begin{align}
x(s(t)) &= A\cos(a s(t) ), ~ y(s(t)) = B\sin(bs(t)),
\label{eqn_lis_gen}
\end{align}
where $s$ is the parameter, $a$ and $b\in {\Bbb N}$ are  co-prime positive integer constants (having common factors,  results in the same Lissajous curve, e.g., $a:b=1:2,\ 4:8$ and $12:24$).  The coordinates $X$ and $Y$ are defined along the directions parallel to the sides of the rectangle and the origin  is chosen to be the center of the rectangular region.


For the work presented in this paper, only non-degenerate Lissajous curves (refer \cite{lisnondgen}) are considered, with the  property that its entire curve length is traversed only once along a single direction by the running parameter $s$ in the parametric period of $[0,\ 2\pi)$. To ensure non-degeneracy of  \eqref{eqn_lis_gen}, $a$  must be an odd integer. The properties of non-degenerate Lissajous curves used to derive some results in this paper are stated with proofs in the online supplement\footnote{ \url{https://drive.google.com/open?id=0B4lbdPZ-BnshS25qcVc3TlNnV1U} \label{fn1}}.
The points on the  Lissajous curve which are encountered twice within a complete traversal of the curve are called intersection points (red and green points in Fig.\ \ref{fig_lis_example}), and the points where the Lissajous curve touches the boundaries of the rectangular region of interest are called boundary points (black and magenta points in Fig.\ \ref{fig_lis_example}). Together, the intersection and boundary points are referred to as node points.
\begin{figure}[h]
\centering
\includegraphics[width=0.85\linewidth]{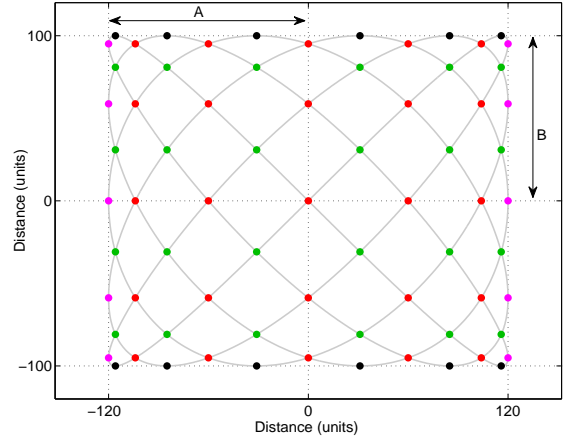}\vspace{-0.2cm}
\caption{Non-degenerate Lissajous curve with $a=5,\ b=6,\ A=120,B=100$ }
\label{fig_lis_example}
\end{figure}
 To guarantee smooth collision-free motion of the agents while ensuring that the  speed of the agents is bounded above by $V_{max}$ (the maximum allowable speed of the agents), smooth transition trajectories are used. The  trajectories  are designed to be at least twice continuously differentiable, using  calculus of variations approach.
%

For this purpose, we consider the solutions of the calculus of variations problem given in Lemma \ref{clm_calculus_variations}:
\begin{lemma}
The optimal function minimizing the integral $\int\limits_{0}^{T_f} \frac{{\dddot{g}}^2(t)}{2} dt $ for fixed end time $T_f>0$, subject to the following sets of boundary conditions, is as follows:
\begin{enumerate}[label=  C\arabic*]
\item For $g(0)=g_0$, $\dot{g}(0)=\dot{g}_0$, $\ddot{g}(0)=0$, $g(T_f)=$free, $\dot{g}(T_f)=\dot{g}_f$, $\ddot{g}(T_f)=0$, it is:\\ $g^*(t)=(\dot{g}_f-\dot{g}_0)\left(-\frac{t^4}{2T_f^3}+\frac{t^3}{T_f^2}\right)+\dot{g}_0t+g_0$.\label{bcond_c1}
\item For $g(0)=g_0$, $\dot{g}(0)=0$, $\ddot{g}(0)=0$, $g(T_f)=g_f$, $\dot{g}(T_f)=0$, $\ddot{g}(T_f)=0$, it
is:\\ $g^*(t)=(g_f-g_0)\left(10T_f^2-15T_ft+6t^2\right)\frac{t^3}{T_f^5}+g_0$. \label{bcond_c2}

\end{enumerate}\label{clm_calculus_variations}
\end{lemma}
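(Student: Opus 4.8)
The plan is to read the statement as a fixed-horizon problem in the calculus of variations whose Lagrangian depends only on the third derivative, $L(\dddot g)=\tfrac12\dddot g^{\,2}$, to produce the extremals explicitly, and then to certify that they are the (unique) global minimizers.

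First I would write the Euler--Poisson equation appropriate to a functional containing derivatives up to order three,
\[
\frac{\partial L}{\partial g}-\frac{d}{dt}\frac{\partial L}{\partial \dot g}+\frac{d^2}{dt^2}\frac{\partial L}{\partial \ddot g}-\frac{d^3}{dt^3}\frac{\partial L}{\partial \dddot g}=0 .
\]
Since $L$ depends only on $\dddot g$, all terms but the last vanish and the equation collapses to $g^{(6)}(t)=0$. Hence every extremal is a polynomial of degree at most five, $g^*(t)=\sum_{k=0}^{5}c_kt^k$, and the task reduces to fixing the six coefficients from the prescribed data.

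For case \ref{bcond_c2} the values of $g,\dot g,\ddot g$ are prescribed at both endpoints, i.e.\ six conditions, so the $c_k$ solve a linear system with an invertible (generalized Vandermonde) matrix; a direct computation gives $c_0=g_0$, $c_1=c_2=0$ and $c_3,c_4,c_5$ equal to $10(g_f-g_0)/T_f^3$, $-15(g_f-g_0)/T_f^4$, $6(g_f-g_0)/T_f^5$, which after collecting powers of $t$ is exactly the stated $g^*$. For case \ref{bcond_c1} the endpoint value $g(T_f)$ is free, so only five conditions are imposed and the missing sixth equation is the natural (transversality) condition at $t=T_f$: carrying out the integration by parts in the first variation, the boundary factor multiplying $\delta g(T_f)$ for a third-order Lagrangian is $\frac{\partial L}{\partial \dot g}-\frac{d}{dt}\frac{\partial L}{\partial \ddot g}+\frac{d^2}{dt^2}\frac{\partial L}{\partial \dddot g}$, which here equals $g^{(5)}$, so the natural condition reads $g^{(5)}(T_f)=0$, forcing $c_5=0$. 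The remaining five conditions ($g(0)=g_0$, $\dot g(0)=\dot g_0$, $\ddot g(0)=0$, $\dot g(T_f)=\dot g_f$, $\ddot g(T_f)=0$) then give $c_0=g_0$, $c_1=\dot g_0$, $c_2=0$, $c_3=(\dot g_f-\dot g_0)/T_f^2$, $c_4=-(\dot g_f-\dot g_0)/(2T_f^3)$, i.e.\ the stated $g^*$.

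Finally I would upgrade ``extremal'' to ``unique global minimizer'' using strict convexity of $L$ in $\dddot g$. Writing an arbitrary admissible competitor as $g=g^*+\eta$, where $\eta$ satisfies the homogeneous versions of the boundary conditions, expansion of the quadratic cost gives
\[
\int_{0}^{T_f}\tfrac12\dddot g^{\,2}\,dt=\int_{0}^{T_f}\tfrac12\bigl(\dddot{g}^{*}\bigr)^{2}\,dt+\int_{0}^{T_f}\dddot{g}^{*}\,\dddot\eta\,dt+\int_{0}^{T_f}\tfrac12\dddot\eta^{\,2}\,dt .
\]
Integrating the middle term by parts three times produces $-\int_0^{T_f}(g^*)^{(6)}\eta\,dt$ plus boundary terms in $\dddot{g}^{*}\ddot\eta$, $(g^*)^{(4)}\dot\eta$ and $(g^*)^{(5)}\eta$ evaluated at $0$ and $T_f$; the interior integral vanishes because $(g^*)^{(6)}=0$, and every boundary term vanishes because $\eta,\dot\eta,\ddot\eta$ are zero there in case \ref{bcond_c2}, while in case \ref{bcond_c1} the only potentially surviving term $(g^*)^{(5)}(T_f)\,\eta(T_f)$ is killed by the natural condition $(g^*)^{(5)}(T_f)=0$. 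Hence the cost at $g$ exceeds that at $g^*$ by $\int_0^{T_f}\tfrac12\dddot\eta^{\,2}\,dt\ge 0$, so $g^*$ is a global minimizer, and equality forces $\dddot\eta\equiv0$, whereupon the homogeneous initial conditions force $\eta\equiv0$, giving uniqueness. The step I expect to need the most care is the transversality analysis in case \ref{bcond_c1}: correctly identifying the natural boundary condition for a Lagrangian in $\dddot g$ (it is $g^{(5)}(T_f)=0$, not a condition on $\ddot g$ or $\dddot g$) and checking that precisely this condition makes the cross-term boundary contributions cancel in the optimality argument; the coefficient solving and the degree reduction from the Euler--Poisson equation are routine.
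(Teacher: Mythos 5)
Your proposal is correct and follows essentially the same route as the paper: the Euler--Poisson equation $g^{(6)}=0$ reduces the problem to quintic polynomials, the natural (transversality) condition $g^{(5)}(T_f)=0$ supplies the missing sixth equation in case \ref{bcond_c1}, and the coefficients are read off from the resulting linear systems. Your additional convexity argument (expanding the cost about $g^*$ and killing the cross term by parts) goes beyond the paper, which only verifies the first-order necessary condition; this upgrade from ``extremal'' to ``unique global minimizer'' is a genuine, and welcome, strengthening.
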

The   proof for Lemma \ref{clm_calculus_variations} is given in the  appendix.\\

\noindent{\bf Notation :}
We define the following notations for brevity of representation: $T_0$, $T_f$ are start and end times of a parametric trajectory. $T_p=T_f-T_0$ is the time period of the transition trajectory. $\Delta t=t-T_0$ where $t$ is the current time. $\Delta g=g_f-g_0$  where $g_0,g_f$ are initial and final parametric positions at times $T_0$ and $T_f$ respectively. Similarly $\Delta \dot{g}=\dot{g}_f-\dot{g}_0$,  where  $\dot{g}_f,\ \dot{g}_0$ are initial and final parametric speeds at times  $T_0$ and $T_f$ respectively.\\

 Lemma \ref{clm_calculus_variations} gives the template for designing the following trajectories for the reconfiguration strategy:

\subsubsection*{Monotone transition trajectory}

  To design a monotone  transition trajectory  for smooth acceleration or deceleration for a parameter $g_m(t)$ from an initial  parameter value and speed to a final parameter value and speed, we use the solution for the free end state  and fixed end time boundary conditions \ref{bcond_c1} of the calculus of variations problem discussed in Lemma \ref{clm_calculus_variations}. In this case the boundary conditions on  parameter value and speed are  $g_m(T_0)=g_0$ , $\dot{g}_m(T_0)=\dot{g}_0$ and  $\dot{g}_m(T_f)=\dot{g}_f$, and the terminal value of $g_m(T_f)$  is free. Thus parameter trajectory $g_m(t)$ and its derivatives  are as follows:
\begin{align}
g_{m}(t)&=\Delta\dot{g}\left(-\frac{\Delta t^4}{2T_p^3}+\frac{\Delta t^3}{T_p^2}\right)+\dot{g}_0\Delta t+g_0,
\label{eqn_g_tr}\\
\dot{g}_{m}(t)&=\Delta\dot{g}\left(-2\frac{\Delta t^3}{T_p^3}+\frac{3\Delta t^2}{T_p^2}\right)+\dot{g}_0,
\label{eqn_g_dot_tr}\\
\ddot{g}_{m}(t)&=6\Delta\dot{g}\left(-\frac{\Delta t^2}{T_p^3}+\frac{\Delta t}{T_p^2}\right),
\label{eqn_g_2dot_tr}  \\
\dddot{g}_{m}(t)&=6\Delta\dot{g}\left(-2\frac{\Delta t}{T_p^3}+\frac{1}{T_p^2}\right).
\label{eqn_g_3dot_tr}
\end{align}
 Some properties of the $g_{m}(t)$ and its derivatives are summarised as follows:\vspace{-0.15cm}
\begin{enumerate}
\item  $\ddot{g}_{m}(T_0)=0$, $\ddot{g}_{m}(T_f)=0$ implying constant parametric speed at the beginning and the end of the  transition trajectory in the time window $[T_0,\ T_f]$.\vspace{-0.25cm}
\item From \eqref{eqn_g_2dot_tr}, $t=T_0 $ and $T_f$ are the solutions of $\ddot{g}_{m}(t)=0$, and they are the only extremizers of $\dot{g}_{m}(t)$. For $\dot{g}_0<\dot{g}_f$ from \eqref{eqn_g_3dot_tr}, $\dddot{g}_{m}(T_f)<0$ and   $\dddot{g}_{m}(T_0)>0$ which implies $\dot{g}_{m}(t)$ is monotonically increasing in $[T_0\ T_f]$ with minimum value $\dot{g}_0$ at $t=T_0$ and maximum value $\dot{g}_f$ at $t=T_f$. By similar arguments, for $\dot{g}_f<\dot{g}_0$, $\dot{g}_{m}(t)$ is monotonically decreasing in $[T_0\ T_f]$ with minimum value $\dot{g}_f$ at $t=T_f$ and maximum value $\dot{g}_0$ at $t=T_0$.
\end{enumerate}

\begin{figure}[h]
\centering
\includegraphics[width=0.9\linewidth]{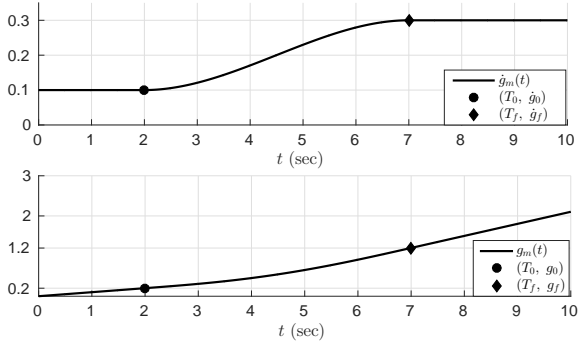}\vspace{-0.2cm}
\caption{A monotone transition trajectory example where: $T_0=2\ sec$, $g_0=0.2$, $g_f=1.2$, $\dot{g}_0=0.1/sec$, $\dot{g}_f=0.3/sec$. Thus $T_p=5\ sec$ and $T_f=7\ sec$ }
\label{fig_monotone_tr}
\end{figure}
The value of the free terminal parameter   $g_m(T_f)$ depends on the fixed end time $T_f=T_0+T_p$. Thus it depends on the width of the time interval $T_p$ specified for the transition. As a result, to obtain a desired  final parameter value   $g_m(T_f)= g_f$, the appropriate transition interval size $T_p$, and $T_f$ can be calculated by substituting $t=T_f$ (i.e., $\Delta t =T_f-T_0$) in \eqref{eqn_g_tr} and we get
\begin{align}
T_p=\frac{2(g_f-g_0)}{\dot{g}_f+\dot{g}_0}\Rightarrow T_f=T_0+\frac{2(g_f-g_0)}{\dot{g}_f+\dot{g}_0}.
\label{eqn_Tf_gtr}
\end{align}

An example of this class of  trajectories is shown in Fig.\ \ref{fig_monotone_tr}. The parameters that completely specify such a monotone trajectory are
\begin{align}
 T_0,\ g_0,\ g_f,\ \dot{g}_0 \mbox{ and } \dot{g}_f.
 \label{eqn_mono_traj_spec}
\end{align}




\subsubsection*{Symmetric transition trajectory}
In order to move a parameter $g_s$ from the initial boundary conditions $g_s(T_0)=g_0$, $\dot{g}_s(T_0)=0$ and $\ddot{g}_s(T_0)=0$,  to the  final boundary condition $g_s(T_f)=g_f$, $\dot{g}_s(T_f)=0$, and $\ddot{g}_s(T_f)=0$ (i.e., parameter is stationary at initial and final time) over a fixed time window $[T_0,\ T_f]$, we design a smooth trajectory that accelerates and decelerates symmetrically. This is done using the  fixed end state and fixed end time boundary conditions \ref{bcond_c2} of the calculus of variations problem discussed in Lemma \ref{clm_calculus_variations}.  Thus the  parameter trajectory $g_s(t)$ and its derivatives  are as follows:
\begin{align}
g_{s}(t)&=\Delta g\left(10T_p^2-15T_p\Delta t+6\Delta t^2\right)\frac{\Delta t^3}{T_p^5}+g_0,
\label{eqn_g_s_tr}\\
\dot{g}_{s}(t)&=\frac{30\Delta g}{T_p^5}\Delta t^2(T_p - \Delta t)^2,
\label{eqn_g_s_dot_tr}\\
\ddot{g}_{s}(t)&=\frac{60\Delta g}{T_p^5}\Delta t(T_p^2 - 3T_p\Delta t + 2\Delta t^2),\
\label{eqn_g_s_2dot_tr}  \\
\dddot{g}_{s}(t)&=\frac{60\Delta g}{T_p^5}(T_p^2 - 6T_p\Delta t + 6\Delta t^2).
\label{eqn_g_s_3dot_tr} 
%
%
\end{align}
Some properties of the $g_{s}(t)$ and its derivatives are summarised as follows:
\vspace{-0.15cm}
\begin{enumerate}
\item $\dot{g}_{s}(T_0)=0$, $\ddot{g}_{s}(T_0)=0$, $\dot{g}_{s}(T_f)=0$ and $\ddot{g}_{s}(T_f)=0$.\vspace{-0.25cm}
\item From \eqref{eqn_g_s_2dot_tr}, $t=T_0 , T_0 + \frac{T_p}{2}$ and $T_f$ are the solutions of $\ddot{g}_{s}(t)=0$, and are the only extremizers of $\dot{g}_{s}(t)$. For $g_0<g_f$, from \eqref{eqn_g_s_3dot_tr}, $\dddot{g}_{s}(T_0)=\dddot{g}_{s}(T_f)=\frac{60\Delta g}{T_p^3}>0$ and   $\dddot{g}_{s}(T_0+\frac{T_p}{2})=  -\frac{30\Delta g}{T_p^3} <0$ which implies $\dot{g}_{s}(t)$ attains maximum value at $t=T_0+\frac{T_p}{2}$, and  minimum value at $t=T_0$ and $t=T_f$. By similar arguments, for $g_f<g_0$, $\dot{g}_{s}(t)$ attains its minimum value at $t=T_0+\frac{T_p}{2}$ and maximum value at $t=T_0$ and $t=T_f$.\vspace{-0.25cm}
\item For  $\dot{g}_{s}(t)$, the maximum value $\dot{g}_{max}$ for the case $g_0<g_f$  and  the minimum value $\dot{g}_{min}$ for case $g_f<g_0$ are both attained at $t=T_o+\frac{T_p}{2}$ and
\begin{align}
\dot{g}_{max}=\frac{15 \vert g_f - g_0 \vert }{8T_p},\  \dot{g}_{min}=-\frac{15 \vert  g_f - g_0 \vert }{8T_p}.
\label{eqn_sym_gdot_max}
\end{align}
\end{enumerate}
 \begin{figure}[!h]

\centering
\includegraphics[width=0.9\linewidth]{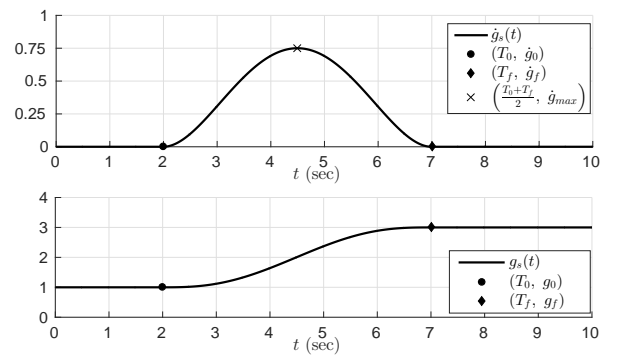}\vspace{-0.2cm}
\caption{A symmetric transition trajectory example where: $T_0=2\ sec$, $T_p=5\ sec$, $g_0=1$, $g_f=3$. Thus $T_f=7\ sec$ and $\dot{g}_{max}=0.75/sec$ at $t=\frac{T_0+T_f}{2}=4.5\ sec$}
\vspace{-0.25cm}\label{fig_symmetric_tr}
\end{figure}
An example of this class of  trajectories is shown in Fig.\ \ref{fig_symmetric_tr}. The constants which completely specify this trajectory are: 
\begin{align}
 T_0,\ T_f,\ g_0 \mbox{ and } g_f .
\label{eqn_sym_traj_spec}
\end{align}

The monotone transition trajectory is used to decelerate and accelerate the multi-agent formation along the Lissajous curve. The symmetric transition trajectory is used to design collision-free trajectories between Lissajous curves. The next section summarises and builds upon the prior work in \cite{lisiros}. 

\section{Proposed surveillance strategy }
\label{sec_surveillance}
The proposed surveillance strategy discussed in this section meets the objectives \ref{ob_1},\ref{ob_2} and \ref{ob_3} listed in Section \ref{sec_intro} (as shown in \cite{lisiros}). 
 To develop the theory for the proposed trajectory plans, we make the following assumptions:\vspace{-0.15cm}
\begin{enumerate}
\item Agents are helicopter or quadrotor like agents capable of hovering.\vspace{-0.25cm}
\item The search area is an obstacle-free rectangle of dimensions $L\times H$ and all agents  are homogeneous and identical.\vspace{-0.25cm}
\item  Each agent has a circular noise-free sensor footprint of radius $r_s < \frac{1}{2}\sqrt{L^2 + H^2}$ (i.e., half the diagonal length).\vspace{-0.25cm}
\item  Position and timing information for each agent is available from an external source (e.g., visual feedback using cameras, GPS, etc.).\vspace{-0.25cm}
\item  We assume ideal communication links without any delays or packet losses.
\end{enumerate}

Given any non-degenerate Lissajous curve, the proposed multi-agent surveillance strategy defines collision-free trajectories for multiple agents on  this curve, while ensuring that the agent formation lies on an elliptical locus centered around the origin at any instant of time. This is done by initially placing $N=a+b$ agents at equi-parametric separations on the Lissajous curve with constants $a,b$ in \eqref{eqn_lis_gen}, and moving them along the curve at equal parametric speed.

We briefly recall the results of the surveillance strategy from \cite{lisiros} in the following subsections. Later in the paper, we extend this strategy to a reconfigurable formation of agents on  Lissajous curves. 

\subsection{Multi-Agent formation}
\label{sec_formation}
For the proposed placement of agents on the Lissajous curve, the initial parameter value of the agent $i'$ is $s^{i'}(0)=s^{i'}_0=\frac{2\pi(i'-1)}{a+b}$ where $i'\in\{1,2,...,a+b\}$. For the surveillance strategy, since all agents move at the same  parametric rate $\dot{s}$ on the Lissajous curve, the parameter value of the agent at any given point in time  is given by $s^{i'}(t)=s^{i'}_0 + s(t)$, where $s(t)=\int\limits_0^t\dot{s}dt$. The  position of the agent $i'$ is a function of $s(t)$. From \eqref{eqn_lis_gen},  $y_{i'}(s(t))=B\sin\left(\frac{2(i'-1)b\pi}{a+b}+bs(t)\right)$ and $x_{i'}(s(t))=A\cos\left(\frac{2(i'-1)a\pi}{a+b}+as(t)\right)$. Using the identity $\cos(\theta)=\cos(2\pi(i'-1)-\theta)$, $x_{i'}(s(t))=A\cos\left(\frac{2(i'-1)b\pi}{a+b}-as(t)\right)$.
 With a little abuse of notation, we denote the running parameter $s(t)$ by $s$ for brevity. In Claim 2 given in the online supplement\textsuperscript{\ref{fn1}}, we have shown that by appropriate renumbering of the index $i'$ by $i$, the agents can be numbered along the elliptical locus rather than the Lissajous curve, and the resulting position coordinates of the agent $i$ with this new renumbering is given by
%
\begin{align}
x_{i}(s)=A\cos\left(\tilde{\psi}^i-as\right),\  y_{i}(s)=B\sin\left(\tilde{\psi}^i+bs\right),
\label{eqn_lis_xy_param_eqn}
\end{align}
where $\tilde{\psi}^i=\frac{2\pi(i-1)}{a+b}$ for $i\in\{1,2,...,a+b\}$.

 There can be many pairs of mutually co-prime integers satisfying the relation $N=a+b$. For example, with $N=7$ the mutually co-prime  $(a,b)$ pairs satisfying $N=a+b$ are $(1,6),\  (2,5),\ (3,4),\ (4,3),\ (5,2)$ and $(6,1)$. 
In \cite{lisiros}  an algorithm was proposed for choosing the  optimal $(a,b)$ pair that maximises the size bound on the agent and minimises the area coverage time (discussed in subsection \ref{sec_algo}). This algorithm could select a degenerate $(a,b)$ pair with even $a$ and odd $b$, for example $(a,b)=(2,5)$  for $N=7$. 
In \cite{lisiros}, this issue was addressed by  swapping the value of $a$ with $b$ and $A$ with $B$, so as  to get a non-degenerate Lissajous curve. This is equivalent to a rotation of reference frame.

In this paper, we  propose an online formation  reconfiguration strategy that switches between Lissajous curves. Hence it is convenient to maintain the same reference frame across the selected Lissajous curves. Therefore, we represent the swapping of  $a$ with $b$ and $A$ with $B$, by an equivalent phase shift in the original frame as follows:

After the swap, the position coordinates of agent $i$ on the resulting non-degenerate Lissajous curve are $$(\hat{x}_i(s'), \hat{y}_i(s'))=(B\cos(\tilde{\psi}^i-bs'),\ A\sin(\tilde{\psi}^i+as')).$$
  These coordinates can be expressed in the original reference frame  by a rotation of $\frac{-\pi}{2}$, as given below:
 \begin{align*}
\left[\begin{matrix}
x_i(s') \\ y_i(s')
\end{matrix}\right]&= \left[\begin{matrix}
0 & -1  \\ 1 & 0
\end{matrix}\right]\left[\begin{matrix}
\hat{x}_i(s') \\ \hat{y}_i(s')
\end{matrix}\right]=
\left[\begin{matrix}
A\cos(\tilde{\psi^i}+\frac{\pi}{2}+as') \\ B\sin(\tilde{\psi^i}+\frac{\pi}{2}-bs')
\end{matrix}\right].
\end{align*}

By substituting a negative parameter $s=-s'$ (reversing the direction of traversal), we get $
(x_i(s) ,y_i(s))
=\left(A\cos(\tilde{\psi^i}+\frac{\pi}{2}-as),\ B\sin(\tilde{\psi^i}+\frac{\pi}{2}+bs)\right)$. Thus the general representation of the agent positions on the Lissajous curve for the proposed strategy is
 \begin{align}
 x_i(s,\psi)\vert_{\psi=\psi^i}&=A\cos\left(\psi-as\right),\nonumber\\ y_i(s,\psi)\vert_{\psi=\psi^i}&=B\sin\left(\psi+bs\right),
 \label{eqn_alt_pos_rep}
 \end{align}
where $\psi^i=\frac{2\pi(i-1)}{a+b}+\frac{o\pi}{2}$ and the offset $o$ is
\begin{align}
o=1-(a\bmod{2}).
\label{eqn_offset}
\end{align}

\subsection{Elliptical formation locus}
\label{sec_ellipse_formation}\vspace{-0.03cm}
In \cite{lisiros}, it has been shown that the agent positions given by \eqref{eqn_alt_pos_rep} lie on a conic curve given by
\begin{align}
\frac{y^2}{B^2}+\frac{x^2}{A^2} -\frac{2xy\sin\left((a+b)s\right)}{AB} = \cos^2\left((a+b)s\right),
\label{eqn_lis_ellipse}
\end{align}
which represents an elliptical locus that is always centered about the origin $(x_o,y_o)=(0,0)$.

 For different values of parameter $s$, different elliptical loci are obtained as shown in Fig.\ \ref{fig_lis_ellipse}.
For all $k\in {\Bbb N}$, by \eqref{eqn_lis_ellipse} the parameter values $s=\frac{k\pi}{(a+b)}$ result in the ellipse $\frac{x^2}{A^2}+\frac{y^2}{B^2}=1$. Similarly parameter values $s=\frac{(2k-1)\pi}{2(a+b)}$ result in a degenerate ellipse $$\frac{x^2}{A^2}+\frac{y^2}{B^2}+(-1)^{k}\frac{2xy}{AB}=0,$$ which is the straight line  $y=\frac{(-1)^{k+1}B}{A}x$ along the diagonals of the rectangular region defined by $[-A,\ A]\times[-B,\ B]$.
\begin{figure}[h]
\centering
\includegraphics[width=0.9\linewidth]{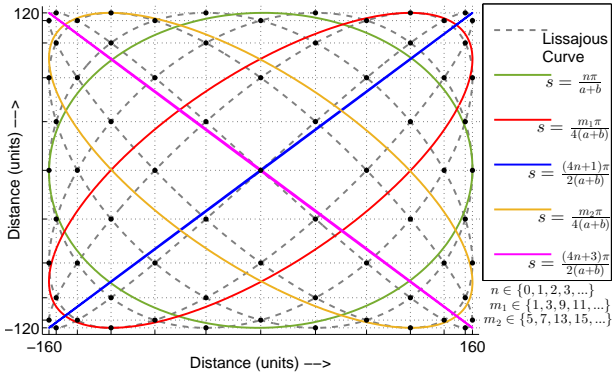}\vspace{-0.4cm}
\caption{Loci of the agent positions with placements given by \eqref{eqn_lis_xy_param_eqn} at different values of parameter $s$ for Lissajous curve having  $a=5$, $b=6$, $A=160$ and $B=120$.}
\label{fig_lis_ellipse}
\end{figure}
Notice that the equation of the ellipse given by   \eqref{eqn_lis_ellipse},  which is the locus for the multi-agent formation, is only dependent on the parameter $s$ and not $\psi^i$. Now in \eqref{eqn_alt_pos_rep}, for an agent $i$, if we make $s$ constant (fixing the ellipse) and vary the  parameter $\psi$, then  we can achieve a parametric motion along the formation ellipse. Note that the formation ellipse has a parametric length $2\pi$ in terms of the parameter $\psi$,  thus the proposed formation places the agents at equi-parametric intervals ($\tilde{\psi}^i=\frac{2\pi(i-1)}{N}$ for $i\in\{1,2,...,N\}$) along the parametric length of the formation ellipse. We use this idea later to move the agents from one Lissajous curve to another for formation reconfiguration.

\subsection{Speed profile of the agents}
\label{sec_vel}
Let $V_{max}$ be the largest permissible linear speed for each agent. Differentiating \eqref{eqn_alt_pos_rep} and assuming the running parameters $\psi$ and $s$ to be  functions of time, the components of velocity along the $x$-direction and the $y$-direction are 
\begin{align}
\dot{x}&=-A\sin(\psi-as)(\dot{\psi}-a\dot{s}),\label{eqn_xdot}\\
\dot{y}&=B\cos(\psi+bs)(\dot{\psi}+b\dot{s}),\label{eqn_ydot}
\end{align}
and the resultant speed $V$ is given by
\begin{align}
V(t)&=\sqrt{\dot{y}^2+\dot{x}^2}.
\label{eqn_V_alt_param_a2b2}
\end{align}
It is desirable to have a continuous velocity profile of the agents in order to facilitate a  practical implementation of the proposed surveillance and formation reconfiguration strategy. Furthermore, the resultant speed must be maintained below $V_{max}$. For the surveillance mission, it was proposed in \cite{lisiros} that the parameter $\psi=\psi^i=\frac{2\pi(i-1)}{N}$ be a constant and the agents move on the Lissajous curve at a constant non-zero parametric speed $\dot{s}_{nom}$ given by
\begin{align}
\dot{s}_{nom}=\frac{V_{max}}{\sqrt{A^2a^2+B^2b^2}}.
\label{eqn_sdot_choice}
\end{align}
Thus $\dot{\psi}=0$ and it was shown in \cite{lisiros} that selecting the nominal value of $\dot{s}=\dot{s}_{nom}$
guarantees  $V(t)\leq V_{max}$.

\subsection{Sensing  and communication range of agents}
\label{sec_sensor_comm}

From
\eqref{eqn_alt_pos_rep}, the $x$ and $y$ coordinate separation between any two agents $i,j$ in the proposed formation, is given by\begin{center}
 $\vert x_i-x_j \vert=2A\left\vert\sin\left(\Psi_p-as\right)\sin\left(\Psi_m\right)\right\vert\mbox{ and }$ \\ $\vert y_i-y_j \vert=2B\left\vert\cos\left(\Psi_p+bs\right)\sin\left(\Psi_m\right)\right\vert, $
\end{center}
where $\Psi_p=\frac{\psi^i+\psi^j}{2}$, and $\Psi_m=\frac{\psi^i-\psi^j}{2}$. As a result the Euclidean distance between the agents $i$ and $j$ is
 \begin{align}
D_{ij}= 2\sin(\Psi_m)\sqrt{A^2\sin^2\left(\Psi_p-as\right)+B^2\cos^2\left(\Psi_p+bs\right)}.
\label{eqn_dist_ij}
\end{align}
  For adjacent agents  along the elliptical locus, $\psi^j=\psi^i \pm\frac{2\pi}{N}$, because for the proposed strategy, the agents are equi-parametrically distributed along the elliptical locus for all time. Hence the Euclidean distance between adjacent agents is given by
\begin{align}
D_{ad}=2\sin\left(\frac{\pi}{N} \right)\sqrt{A^2\sin^2\left(\Psi_p-as\right)+B^2\cos^2\left(\Psi_p+bs\right)}.
\label{eqn_dist_ii1}
\end{align}
 $D_{ad}$ is bounded above by $D_{M}=2\sin\left(\frac{\pi}{N} \right)\sqrt{A^2+B^2}$. As a result, if each agent  has a circular sensor footprint of radius $r_s$,
%
then  overlapping sensor footprints of parametrically adjacent agents along the elliptical locus can be guaranteed at any value of the parameter $s$ by ensuring $r_{s}\geq \frac{D_{M}}{2}$. Similarly,
by ensuring that each agent's spherical communication range has radius $r_{com}>D_M$, we  guarantee that  the adjacent agents can communicate for cooperation during formation reconfiguration manoeuvres. In practice, considering the presence of curve tracing errors in the implementation using real robotic platforms such as quadrotors, the lower bounds on sensor footprint radius $r_s$ and communication range radius $r_{com}$ are chosen as
   \begin{align}
   r_{sm} &= \eta \sin\left(\frac{\pi}{N} \right)\sqrt{A^2+B^2} \mbox{ and }
\label{eqn_rs_lb}\\
 r_{cm}&= 2\eta\sin\left(\frac{\pi}{N}\right)\sqrt{A^2+B^2}
 \label{eqn_r_com}
\end{align}
respectively, where  $\eta \geq 1$ is a safety factor to ensure sufficient sensor footprint overlap and communication range.

\subsection{Agents with non-zero size and coverage time}\label{sec_size_time}
In practice, real agents such as ground robots or  quadrotors have a non-zero size and are not point agents. Thus in \cite{lisiros}, an upper bound $r_{du}$, on the radius of the circular hull was derived, which contains the physical dimensions of the agent. This upper bound is given by:
\begin{align}
r_{du}=\sin\left(\frac{\pi}{N}\right)\frac{AB}{\sqrt{A^2a^2+B^2b^2}}.
\label{eqn_rdm_ub}
\end{align}
 Agents having sizes smaller than this bound are guaranteed to have collision-free trajectories for the proposed surveillance strategy.

Since  $N$ agents (where $N=a+b$) are initially placed along the curve with a parametric separation of $\frac{2\pi}{a+b}$ and all agents move with equal parametric speed $\dot{s}_{nom}$ along the curve,  a parametric displacement of $\frac{2\pi}{a+b}$ for all the agents guarantees that the entire Lissajous curve is collectively traversed by all the agents. As a consequence, an upper bound on the time taken to collectively cover the entire rectangular area by the multi-agent formation is
 \begin{align}
 T_{cov}=\frac{2\pi}{(a+b)\dot{s}_{nom}}
 =\frac{2\pi\sqrt{A^2a^2+B^2b^2}}{NV_{max}}.
 \label{eqn_T}%
 \end{align}

The results  proved in \cite{lisiros} can be summarised  by the following theorem:

\begin{theorem}
\label{thm_iros}
Given a  non-degenerate Lissajous curve described by \eqref{eqn_lis_gen} having parameters $A,B,a,b$, the  multi-agent formation of  parametric point agents on the curve given by \eqref{eqn_lis_xy_param_eqn}, equipped with  a circular sensor footprint of radius $r_s$ given by \eqref{eqn_rs_lb} and moving along the curve with an equal parametric speed $\dot{s}_{nom}$ given by \eqref{eqn_sdot_choice} guarantees the fulfilment of objectives:\\ 
1) Collision-free paths for the agents in the formation, with  agent speed bounded above by $V_{max}$.\\
2) Complete and repeated coverage of the rectangular area by the multi-agent formation.\\
3) Finite time detection of a rogue element trying to escape from the region starting from the center of the region.
\end{theorem}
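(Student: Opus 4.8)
The plan is to prove the three items separately, assembling the machinery of Sections~\ref{sec_prelims}--\ref{sec_surveillance} with the geometric facts about non-degenerate Lissajous curves recalled from \cite{lisiros}; a fully self-contained treatment is essentially a reconstruction of \cite{lisiros}, sketched here. The speed bound is immediate: putting $\dot\psi\equiv 0$ and $\dot s\equiv\dot s_{nom}$ in \eqref{eqn_xdot}--\eqref{eqn_ydot} gives $\dot x_i=Aa\dot s_{nom}\sin(\psi^i-as)$ and $\dot y_i=Bb\dot s_{nom}\cos(\psi^i+bs)$, which are continuous in $t$, and $V^2=\dot s_{nom}^2\big(A^2a^2\sin^2(\psi^i-as)+B^2b^2\cos^2(\psi^i+bs)\big)\le\dot s_{nom}^2(A^2a^2+B^2b^2)=V_{max}^2$ by \eqref{eqn_sdot_choice}, so $V(t)\le V_{max}$ for every agent and all $t$.

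\emph{Collision-freeness.} I would start from \eqref{eqn_dist_ij}: over distinct agents the factor $|\sin\Psi_m|=|\sin(\pi(i-j)/N)|$ is smallest for parametrically adjacent agents, where it equals $\sin(\pi/N)>0$, so it suffices to bound the radical $R(s)=A^2\sin^2(\Psi_p-as)+B^2\cos^2(\Psi_p+bs)$ from below uniformly in $s$ and in the phase $\Psi_p$. Writing $u=\Psi_p-as$, $v=\Psi_p+bs$, the map $s\mapsto(u,v)$ traces a closed curve of rational slope on the torus; locating the critical points of $R$ and using the non-degeneracy hypothesis that $a$ is odd, one shows $R(s)\ge A^2B^2/(A^2a^2+B^2b^2)$, hence $D_{ij}\ge 2\sin(\pi/N)\,AB/\sqrt{A^2a^2+B^2b^2}=2r_{du}$ with $r_{du}$ as in \eqref{eqn_rdm_ub}. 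Then the closed disks of radius $r_{du}$ about the agent centres are pairwise disjoint at all times, so agents whose physical hull fits inside such a disk never collide. I expect this uniform lower bound — whose minimiser depends on the pair through $\Psi_p$, so the worst case has to be isolated — to be the main obstacle.

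\emph{Complete and periodic coverage.} The $N=a+b$ agents begin at parametric spacing $2\pi/N$ and advance at the common rate $\dot s_{nom}$, so after a parametric displacement of $2\pi/N$, i.e.\ after time $T_{cov}$ of \eqref{eqn_T}, the arcs traced by the agents cover the whole curve, since consecutive start points are $2\pi/N$ apart and each swept arc has parametric length $2\pi/N$. By the mesh-density property of the non-degenerate curve \eqref{eqn_lis_gen} (recalled from \cite{lisiros} and its online supplement), the curve dilated by $r_s\ge r_{sm}$ from \eqref{eqn_rs_lb} contains $[-A,A]\times[-B,B]$, so every point of the region is sensed within each window of length $T_{cov}$; periodicity follows because the positions \eqref{eqn_lis_xy_param_eqn} are $2\pi$-periodic in $s$, hence the whole sensing pattern repeats with temporal period $2\pi/\dot s_{nom}$.

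\emph{Finite-time entrapment.} Let the rogue follow a continuous path $(x_r(t),y_r(t))$ with $(x_r(0),y_r(0))=(0,0)$, put $\theta(t)=(a+b)\dot s_{nom}t$, and define $\phi(t)=\frac{x_r^2}{A^2}+\frac{y_r^2}{B^2}-\frac{2x_ry_r\sin\theta}{AB}-\cos^2\theta$, so by \eqref{eqn_lis_ellipse} the rogue lies on the current formation ellipse exactly when $\phi(t)=0$. Then $\phi(0)=-1<0$, while at $t_1=T_{cov}/4$, where $\theta=\pi/2$ and $\cos\theta=0$, one gets $\phi(t_1)=\big(\frac{x_r}{A}-\frac{y_r}{B}\big)^2\ge 0$ regardless of where the rogue has moved; by the intermediate value theorem there is $t^*\le T_{cov}/4$ with the rogue on the formation ellipse. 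Since $r_s\ge r_{sm}$ makes the footprints of parametrically adjacent agents overlap (Section~\ref{sec_sensor_comm}, via $D_{ad}\le D_M\le 2r_s$ together with a bound on the intervening arc), the union of footprints covers that ellipse, so the rogue is within sensing range of some agent at $t^*$ and is detected in finite time. The secondary technical points are this ``overlapping footprints cover the ellipse'' claim and the mesh-density estimate behind complete coverage; the speed bound is the only routine part.
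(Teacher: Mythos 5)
Your proposal is correct and follows essentially the same route as the paper, which states Theorem~\ref{thm_iros} only as a summary of results proved in \cite{lisiros} and recalls exactly the quantities you use: the speed bound via \eqref{eqn_xdot}--\eqref{eqn_ydot} with \eqref{eqn_sdot_choice}, the pairwise-distance formula \eqref{eqn_dist_ij} leading to the hull bound $r_{du}$ in \eqref{eqn_rdm_ub}, the equi-parametric tiling argument behind $T_{cov}$ in \eqref{eqn_T}, and the overlapping-footprint ring on the ellipse family \eqref{eqn_lis_ellipse} for finite-time detection. The step you single out as the main obstacle --- the uniform lower bound $A^2\sin^2(\Psi_p-as)+B^2\cos^2(\Psi_p+bs)\ge A^2B^2/(A^2a^2+B^2b^2)$ --- is indeed the crux, and it goes through by the same device the paper deploys in its appendix for Proposition~\ref{prop_ellipse_avoid} (the bound $\sin^2u\ge\tfrac{4}{\pi^2}u^2$ of Lemma~\ref{clm_parabolic_lb} plus Cauchy--Schwarz on the constraint $bu+av=(a+b)\Psi_p$), combined with your observation that oddness of $a$ keeps that line at distance at least $\pi/2$, in the functional $bu+av$, from any simultaneous zero of $\sin u$ and $\cos v$.
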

\subsection{Selection of Lissajous curve parameters} \label{sec_algo}
Suppose the number of agents to be used for the proposed surveillance  strategy is $N$, and they monitor a rectangular area of dimensions $L\times H$. Then we choose the Lissajous curve constants $A=\frac{L}{2}$ and $B=\frac{H}{2}$ in \eqref{eqn_lis_gen}. Recall that to achieve the proposed multi-agent formation the agent positions are defined by \eqref{eqn_alt_pos_rep} where $N=a+b$ for a pair of co-prime positive integers $(a,b)$. For the proposed strategy we select the $(a,b)$ pair  considering  the following claim.
\begin{claim} (\cite{lisiros})
For a given $N$, the value $a^*=\frac{B^2N}{A^2+B^2}$ with $b=N-a^*$ is the minimizer of  $T_{cov}$ and maximizer of $r_{du}$, where $T_{cov}$ and $r_{du}$ are given by  \eqref{eqn_T} and \eqref{eqn_rdm_ub} respectively.
\label{clm_T_min}
\end{claim}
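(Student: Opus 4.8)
The plan is to notice that both quantities to be optimized depend on the pair $(a,b)$ only through the single scalar $\phi(a) := A^2a^2 + B^2b^2$ subject to the linear constraint $b = N-a$; all remaining data ($A$, $B$, $N$, $V_{max}$, and the fixed positive constant $\sin(\pi/N)$, positive since $N\ge 2$) do not depend on $(a,b)$. So first I would relax $a$ to a continuous variable on $(0,N)$ and write $\phi(a) = A^2a^2 + B^2(N-a)^2 > 0$. Because $t\mapsto\sqrt{t}$ is strictly increasing on $(0,\infty)$, the map $a\mapsto T_{cov}(a) = \frac{2\pi}{NV_{max}}\sqrt{\phi(a)}$ from \eqref{eqn_T} is strictly increasing in $\phi$, while the map $a\mapsto r_{du}(a) = \sin(\pi/N)\,AB/\sqrt{\phi(a)}$ from \eqref{eqn_rdm_ub} is strictly decreasing in $\phi$. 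Hence minimizing $T_{cov}$ and maximizing $r_{du}$ are one and the same problem: minimize $\phi(a)$.

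Next I would carry out that one-variable minimization. Differentiating, $\phi'(a) = 2A^2a - 2B^2(N-a) = 2(A^2+B^2)a - 2B^2N$, whose unique zero is $a^* = \frac{B^2N}{A^2+B^2}\in(0,N)$; since $\phi''(a) = 2(A^2+B^2) > 0$, $\phi$ is strictly convex, so $a^*$ is its global minimizer, with companion value $b = N - a^* = \frac{A^2N}{A^2+B^2}$. Substituting $a = a^*$ back into \eqref{eqn_T} and \eqref{eqn_rdm_ub} then yields, respectively, the minimal $T_{cov}$ and the maximal $r_{du}$, which proves the claim.

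The only point needing comment is admissibility: $a$ and $b$ must be co-prime positive integers with $a+b=N$, so $a^*$ itself is in general not a feasible choice. This is harmless for the way the claim is used in the selection procedure of Section \ref{sec_algo}: strict convexity of $\phi$ with minimizer $a^*$ means that the best feasible pair is obtained by taking the admissible $a$ closest to $a^*$, which is exactly what the algorithm does. I do not anticipate any genuine obstacle here — the heart of the argument is a routine convexity/monotonicity computation — the only thing to be careful about is stating explicitly that the monotonicity of $\sqrt{\cdot}$ reduces both extremization problems to the single minimization of $\phi$, so that one convexity argument settles them simultaneously.
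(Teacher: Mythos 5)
Your proof is correct, and it is the standard argument: both $T_{cov}$ and $r_{du}$ are monotone (increasing and decreasing, respectively) functions of the single quadratic $\phi(a)=A^2a^2+B^2(N-a)^2$, whose strict convexity gives the unique minimizer $a^*=\frac{B^2N}{A^2+B^2}$. The paper itself does not reprove this claim (it is imported from \cite{lisiros}), but your derivation matches the reasoning implied there — in particular the symmetry $T_{cov}(a^*+\delta)=T_{cov}(a^*-\delta)$ quoted after the claim is exactly the symmetry of your quadratic about its vertex, and your closing remark on integer admissibility correctly explains why Algorithm \ref{algo_curveselection} searches for the nearest co-prime pair to $a^*$.
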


\begin{algorithm}[h]
\caption{\textproc{Curve$\_$Select}}
\label{algo_curveselection}
\begin{algorithmic}[1]
\Statex {\bf Inputs:} $A$, $B$, $N$
\Statex {\bf Functions:} $GCD$
\Statex {\bf Outputs:} $a$,$b$,$o$
\State $a^*=\frac{B^2 N}{A^2+B^2}$, $d_u=\lceil a^*\rceil-a^*$, $d_l=a^*-\lfloor a^*\rfloor$
\State $c=1$
\If {$d_u\leq d_l$ \algoor $a^*<1$}
\State $k_c=\lceil a^*\rceil$, $m=0$
\EndIf
\If {$d_u> d_l$ \algoor $a^*>N-1$}
\State $\ k_c=\lfloor a^*\rfloor$, $m=1$
\EndIf
 \While {$GCD(k_c,\ N-k_c)\neq 1$ }
\State $k_c=k_c+(-1)^{c+m}c$
\State $c=c+1$
  \EndWhile
  \State $a=k_c$, $b=N-k_c$
 \State  $o=0$
\If {$GCD(k_c,\ 2)=2$}
\State $o=1$
\EndIf
\end{algorithmic}
\end{algorithm}
  In \cite{lisiros}, it has been argued that for a $\delta>0$,  $T_{cov}(a^*+\delta)=T_{cov}(a^*-\delta)$. Hence, though $a^*$ may not be an integer, we  find the positive integer $k_c$ such that it is the nearest integer to $a^*$ that yields a co-prime $(k_c,N-k_c)$ pair. Then we select $(a,b)=(k_c,N-k_c)$. This choice is the coprime integer pair that minimises the value of $T_{cov}$ and maximises $r_{du}$ while satisfying $a+b=N$. We search for this mutually co-prime pair iteratively using Algorithm \ref{algo_curveselection}. The completeness of this algorithm has been proved in \cite{lisiros}. In the worst case, Algorithm \ref{algo_curveselection} selects the following:
\begin{align}
(a,b)=\begin{cases} (1,N-1), \mbox{ if } A\geq B\\
								 (N-1,1),\mbox{ if } A<B	.
\end{cases}
\label{eqn_wrst}
\end{align}

 In case the selected $(a,b)$ pair corresponds to a degenerate Lissajous curve, then instead of the swapping  $a$ with $b$ and $A$ with $B$, as done in \cite{lisiros}, the Algorithm \ref{algo_curveselection} computes the phase offset $o$ given by \eqref{eqn_offset}.

\subsection{Number of agents in the formation}
To practically implement the proposed surveillance strategy we need to ensure the following:\vspace{-0.15cm}
\begin{enumerate}
\item Sufficient sensor footprint radius to ensure overlapping sensing ring formation (i.e., $r_s \geq r_{sm}$ in \eqref{eqn_rs_lb}).\vspace{-0.25cm}
\item Sufficient communication range between agents for cooperation  (i.e., $r_{com} \geq r_{cm}$ in \eqref{eqn_r_com}).\vspace{-0.25cm}
\end{enumerate}
To ensure this for a given sensing capability $r_s$ and communication range $r_{com}$ of a single agent, we need to compute the minimum number of agents $N_{min}$ for which the formation is defined. Thus from \eqref{eqn_rs_lb}, defining $\mathcal{R}=\eta\sqrt{A^2+B^2}$, the minimum number of agents necessary to ensure $r_s\geq r_{sm}$ is
\begin{align*}
N_s=\left\lceil \pi \left\vert\sin^{-1}\left(\frac{r_{1}}{\mathcal{R}}\right)\right\vert^{-1}\right\rceil \mbox{ with }r_1= \begin{cases}
								 r_{s},\quad \mbox{if } r_{s}<\mathcal{R}\\
								 \mathcal{R},\quad \mbox{otherwise. }  \end{cases}
\end{align*}
Similarly, from \eqref{eqn_r_com}, the minimum number of agents required to ensure
$r_{com}\geq r_{cm}$ are
\begin{align*}
N_c=\left\lceil \pi \left\vert\sin^{-1}\left(\frac{r_{2}}{2\mathcal{R}}\right)\right\vert^{-1}\right\rceil \mbox{ with }r_2= \begin{cases} r_{com}, \mbox{ if } r_{com}<2\mathcal{R}\\
								 2\mathcal{R},\mbox{ otherwise. }					\end{cases}
\end{align*}
Thus the minimum number of agents in the formation that are necessary to guarantee both $r_s\geq r_{sm}$ and $r_{com}\geq r_{cm}$, is given by
\begin{align}
N_{min}=\max \{N_s,N_c\} \geq 2.
\label{eqn_Nmin}
\end{align}
For the reconfiguration strategy, we design trajectories for addition, removal and replacement of agents in the subsequent section.   We assume that  the maximum number of extra agents $N_{extra}>0$ to be used in addition to the $N_{min}$ agents is pre-defined, and the maximum number of agents in the formation is thus calculated as
\begin{align}
N_{max}=N_{min}+N_{extra}.
\label{eqn_Nmax}
\end{align}
For normal operation we use $N$ agents where $N_{min}< N \leq N_{max}$.
\subsection{Bound on agent size}
Our objective is to design smooth trajectories for reconfiguring the multi-agent formation from one Lissajous curve to another depending on the number of agents being used. Given a number of agents $N_j$ and the corresponding Lissajous curve $(a_j,b_j)$ selected using Algorithm \ref{algo_curveselection}, from \eqref{eqn_rdm_ub} the upper bound on the circular hull radius encompassing the dimensions of the agent for the reconfigurable formation is selected  as
\begin{align}
r_{dm}=\min_{j\in S_N}\left\{\left.\frac{AB}{\sqrt{A^2 a_j^2+B^2 b_{j} ^2}} \right\vert N_j=N_{min}+j \right\}\times \sin\left(\frac{\pi}{N_{max}}\right),
\label{eqn_rdm_reconfig}
\end{align}
 where $S_N:=\{0,1,2,...,N_{extra}\}$. Thus $r_{dm}\leq r_{du_j}=\frac{AB}{\sqrt{A^2 a_j^2+B^2 b_{j} ^2}}\sin\left(\frac{\pi}{N_{l}}\right), \forall j\in S_N.$

 Thus we have Algorithm \ref{algo_initializer}, which initialises all the  parameters discussed above for all the agents.
 \begin{algorithm}[h]
\caption{\textproc{Initialisation}}
\label{algo_initializer}
\begin{algorithmic}[1]

\Statex {\bf Inputs: } $L$, $H$, $ r_s$, $r_{com}$ $V_{max}$, $N_{extra}$, $\eta$
\Statex {\bf Functions: } \textproc{Curve$\_$Select}
\Statex {\bf Outputs: } $A$, $B$, $a$, $b$, $o$, $N$, $N_{min}$, $N_{max}$, $\dot{s}_{nom}$, $r_{dm}$  and
\Statex$ (x_i(0), y_i(0))\ \forall i\in\{1,...,N\}$
\State $A=\frac{L}{2}$, $B=\frac{H}{2}$
\If {$r_s<\eta \sqrt{A^2+B^2}$}
\State $r_1=r_s$
\Else { $r_1=\eta\sqrt{A^2+B^2}$}
\EndIf
\If {$r_{com}<2\eta \sqrt{A^2+B^2}$}
\State $r_2=r_{com}$
\Else { $r_2=2\eta\sqrt{A^2+B^2}$}
\EndIf
\State $N_s=\left\lceil \pi \left\vert\sin^{-1}\left(\frac{r_{1}}{\eta\sqrt{A^2+B^2}}\right)\right\vert^{-1}\right\rceil$
\State $N_c=\left\lceil \pi \left\vert\sin^{-1}\left(\frac{r_{2}}{2\eta\sqrt{A^2+B^2}}\right)\right\vert^{-1}\right\rceil $
\State $N_{min}= \max\{N_s,N_{c}\}$, $N_{max}=N_{min}+N_{extra} $

\For {$j=0,1,...,N_{extra}$}
\State $N_j=N_{min}+j$
\State [$a_j,b_j,o_j$]=\textproc{Curve$\_$Select}($A$, $B$, $N_j$)
\State $r_{d_j}=\frac{AB}{\sqrt{A^2a_j^2+B^2b_j^2}} $
\EndFor
\State $r_{dm}=\min\left\lbrace \left. r_{d_j}\right\vert j\in\{0,...,N_{extra}\}\right\rbrace\times\sin\left(\frac{\pi}{N_{max}}\right) $
\State $N=N_1$, $a=a_1$, $b=b_1$ and $o=o_1$
\State  $\dot{s}_{nom}=\frac{V_{max}}{\sqrt{A^2a_1^2+B^2b_1^2}}$
\For {$i=1,...,N$}
\State $ \left[\begin{matrix}x_i(0)\\ y_i(0)\end{matrix}\right]= \left[\begin{matrix} A\cos\left(\frac{2\pi(i-1)}{N}+\frac{o_1\pi}{2}\right)\\ B\sin\left(\frac{2\pi(i-1)}{N}+\frac{o_1\pi}{2}\right)\end{matrix}\right]$
\EndFor

\end{algorithmic}
\end{algorithm}

\section{Formation reconfiguration strategy\vspace{-0.2cm}}
\label{sec_recon}
In this section, we extend the  proposed surveillance strategy (discussed in Section \ref{sec_surveillance}) for a more practical setting where agents may need to be added, removed or replaced from the formation on the go. This is useful for applications where the surveillance task might last for long durations. Furthermore, the trajectories designed for these tasks must guarantee collision-free motion of the agents. For this an appropriate selection of  trajectories is  done by the agents via cooperation. This is achieved by information exchange on a communication channel and this channel is established between adjacent agents  of the formation by ensuring that the communication range $r_{com}>r_{cm}$ (given by \eqref{eqn_r_com}).

 Suppose the formation initially consists  of $N$ agents and our objective is to switch to a formation of $N-1$ or $N+1$ agents. For a  reconfiguration, the trajectories are planned to move these agents from  the current Lissajous curve  corresponding to $N$ agents  to the Lissajous curve corresponding to $N-1$ or $N+1$ agents (selected in either case by Algorithm \ref{algo_curveselection}). For the replacement operation, we design a simple exchange step  where the agent to be replaced is removed and a new agent takes its place on  the same Lissajous curve. The proposed surveillance and reconfiguration strategy has been designed considering aerial agents such as helicopters and quadrotors, which are capable of safely decelerating to a zero speed in flight (hover) during operation.

 Using the parametric representation in \eqref{eqn_alt_pos_rep}, we design collision-free parametric trajectories based on cooperation for removal, addition and replacement of a single agent. For the discussions in subsequent subsections, we use the  notations given in Table \ref{tab_notation}.

\begin{table}[h]
\centering
\caption{Notations}
\label{tab_notation}
\resizebox{7.5cm}{!}{
\begin{tabular}{|l|l|}
\hline
$N_c$                & Number of agents before reconfiguration                                                                                                                                                            \\ \hline
$(a_c,b_c,o_c)$      & \begin{tabular}[c]{@{}l@{}}Lissajous curve constants for $N_c$ agents\\ given by  Algorithm \ref{algo_curveselection} with $N_c=a_c+b_c$\end{tabular}                                                                   \\ \hline
$s^i_c(t),\psi^i_c(t)$     & \begin{tabular}[c]{@{}l@{}}Curve parameters of agent $i$ in terms\\ of Lissajous curve for $N_c$ agents at time $t$\end{tabular}                                                                                       \\ \hline
$N_d$                & Number of agents after reconfiguration                                                                                                                                                             \\ \hline
$(a_d,b_d,o_d)$      & \begin{tabular}[c]{@{}l@{}}Lissajous curve constants for $N_d$ agents\\ given by Algorithm \ref{algo_curveselection} with $N_d=a_d+b_d$\end{tabular}                                                                   \\ \hline
$s^i_d(t),\psi^i_d(t)$     & \begin{tabular}[c]{@{}l@{}}Curve parameters of agent $i$ in terms\\ of Lissajous curve for $N_d$ agents at time $t$\end{tabular}                                                                                       \\ \hline
$\psi^i_D$                & \begin{tabular}[c]{@{}l@{}}$\psi_d$ parameter value corresponding to\\ formation positions on Lissajous curve for\\ $N_d$ agents assigned to agent $i$ for transition\end{tabular}                                                                                      \\ \hline
$\Delta^{i}_\psi$    & \begin{tabular}[c]{@{}l@{}}Displacement in parameter  $\psi^i_d$  for agent $i$\\ for reconfiguration to $\psi^i_D$\end{tabular}                                                                   \\ \hline
$\Delta^{ij}\psi(t)$    & \begin{tabular}[c]{@{}l@{}}$\psi$ parameter separation between\\ agent $i$ and $j$ at time $t$  \end{tabular}                                                                  \\ \hline
\end{tabular}}
\end{table}


Prior to any reconfiguration, the $N_c$ agents lie on an elliptical locus defined by the value of the parameter $s_c\in[0,\ 2\pi)$ according to \eqref{eqn_lis_ellipse}. Also  for $N_c$ agents on Lissajous curve described by $(a_c,b_c,o_c)$, the agents are equi-parametrically spaced along an ellipse in terms of parameter $\psi_c \in[0,\ 2\pi)$ (refer \eqref{eqn_alt_pos_rep}). Thus parametric separation between the adjacent agents in terms of parameter $\psi_c$  is $\frac{2\pi}{N_c}$. This fact is used by each agent to identify its adjacent agents' parameters through communication.

\subsection{Steps for formation reconfiguration}\label{sec_steps_general}
A brief outline of the steps involved in the reconfiguration operation are as follows:
\begin{itemize}
\item The multi-agent formation of $N_c$ agents decelerates to a stop on the Lissajous curve  described by constants $(a_c,b_c,o_c)$. This is done by using the monotone deceleration trajectory \eqref{eqn_g_tr} for the parameter $s_c$, where the value of $\dot{s}_c=\dot{s}_{nom}$ is decelerated smoothly to  $\dot{s}_c=0$ 
\item For addition and removal operations, where the number of agents after reconfiguration $N_d \neq N_c$, a motion of the agents along the formation ellipse is necessary to reconfigure to the destination Lissajous curve for $N_d$ agents (described by constants $(a_d,b_d,o_d)$). Thus the  parameter  transformation is done to express the curve parameters $(\psi_c,\ s_c)$ in terms of the destination Lissajous curve  as $(\psi_d,\ s_d)$, for all agents.
\item The agents are then assigned a destination position on the Lissajous curve for $N_d$ agents by a cooperative assignment scheme, and this is followed by a motion of the agents along the formation ellipse to these assigned positions by variation of parameter $\psi_d$. Since the agents require to accelerate from rest from the Lissajous curve for $N_c$ agents and decelerate back to rest on their assigned positions on the Lissajous curve for $N_d$ agents, the parameter $\psi_d$ is varied using the symmetric transition trajectory \eqref{eqn_g_s_tr}.
\item Upon reaching the Lissajous curve for $N_d$ agents, the agents now accelerate along this  new Lissajous curve to $\dot{s}_d=\dot{s}_{nom}$ using the monotone trajectory in \eqref{eqn_g_tr} for parameter $s_d$ to resume normal surveillance operation.
\end{itemize}

The common steps involved in a reconfiguration operation are discussed in further detail below: 
\subsubsection{ Monotone decelaration of $s_c$}\label{sec_mono_sc}
When any one of the three  reconfiguration operations is   initiated at time $T_R$, all the formation agents are brought to  a halt on the Lissajous curve for $N_c$ agents. 
This is done by decelerating the $\dot{s}_c$ to $0$.
One of the formation agents chosen as the reconfiguration initiator agent $i_I$, computes $\tilde{s}_f=s^{i_I}_c(T_R)+\frac{\pi}{8N_c}$. The choice of $i_I$ for each reconfiguration operation is operation specific and will be discussed later. If $\tilde{s}$ is used as the stopping $s_c$ parameter value for all active formation agents the formation locus lies on the elliptical locus given by \eqref{eqn_lis_ellipse} for $s_c=\tilde{s}_f$. 

For the replacement operation the final stopping value for agent $i$ in the formation is selected as
\begin{align}
s^i_f=s^{i_I}_{cf}= \tilde{s}_f,
\end{align} 
and this value is communicated to all formation agents by agent $i_I$ via the communication links.
\begin{figure}[h]
\centering
\includegraphics[width=0.8\linewidth]{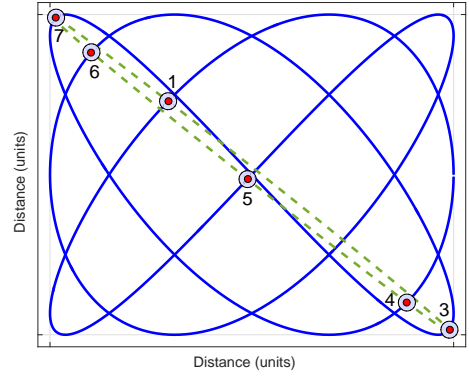}\vspace{-0.2cm}
\caption{An agent removal example where agent 2 is removed and the remaining agents cannot have a collision-free transition along the dotted green elliptical locus due to their non-zero size}
\label{fig_col_ellipse}
\end{figure}
For the addition and removal applications, we intend to reconfigure the formation by moving along this ellipse. For agents having non-zero dimensions, collisions can  occur for a narrow elliptical locus as shown in Fig.\ \ref{fig_col_ellipse}. Thus it is necessary to derive the range of parameter $s_c$ for which motion along the ellipse should be prohibited, and accordingly select an appropriate stopping parameter value $\tilde{s}_f$. To do this, we have the following proposition:

\begin{prop}
For adjacent agents, $i$ and $j$ of an $N_c$ agent formation having dimensions within a circular hull bound $r_{dm}$, and minimum $\psi$ parameter separation $\Delta^{ij}_{min}=\min_{t\in\Bbb {R}^+}\vert\psi_c^j(t)-\psi_c^i(t)\vert>0$, all elliptical loci corresponding to parameter
\begin{align}
s_c\not\in S_{avoid}
\label{eqn_s_avoid_set}
\end{align}
  are feasible for collision-free agent transitions along these loci, where
   $S_{avoid}=\cup_{k\in\Bbb N}\left(s_{diag}(k)-\delta_s ,\ s_{diag}(k)+\delta_s \right)\pmod{2\pi}$, with $s_{diag}(k)=\frac{(2k-1)\pi}{2N_c}$,  $\delta_s=\frac{\pi}{2N_c}\frac{r_{dm}\sqrt{A^2+B^2}}{AB}\left\vert\sin\left(\frac{\Delta^{ij}_{min} }{2}\right)\right\vert^{-1}$.
  \label{prop_ellipse_avoid}
\end{prop}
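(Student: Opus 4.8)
The plan is to establish a uniform lower bound on the Euclidean distance $D_{ij}$ between any two formation agents while they move along a fixed formation ellipse --- that is, with the Lissajous parameter frozen at $s=s_c$ and only the $\psi$-parameters varying --- and then to read off exactly for which $s_c$ this bound stays at or above $2r_{dm}$, the smallest centre-to-centre distance compatible with disjoint circular hulls of radius $r_{dm}$. Since all agents share the same $s_c$ during such a transition, $D_{ij}$ is given by \eqref{eqn_dist_ij} with $a=a_c$, $b=b_c$, $s=s_c$, $\Psi_m=\tfrac12(\psi^i_c-\psi^j_c)$, $\Psi_p=\tfrac12(\psi^i_c+\psi^j_c)$.

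First I would bound the two factors of \eqref{eqn_dist_ij} separately. By the very definition of $\Delta^{ij}_{min}$ we have $|\psi^i_c(t)-\psi^j_c(t)|\ge\Delta^{ij}_{min}$ at all times, and the separation stays in a range on which $|\sin(\cdot/2)|$ is nondecreasing, so $2|\sin\Psi_m|\ge 2\,|\sin(\Delta^{ij}_{min}/2)|$; the same estimate then covers non-adjacent pairs, which remain farther apart along the ellipse. It remains to minimise the radial factor $f(\Psi_p):=A^2\sin^2(\Psi_p-a_cs_c)+B^2\cos^2(\Psi_p+b_cs_c)$ over $\Psi_p\in[0,2\pi)$.

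This minimisation is the crux. A double-angle expansion writes $f(\Psi_p)=\tfrac12(A^2+B^2)+P\cos 2\Psi_p+Q\sin 2\Psi_p$ with $P,Q$ explicit in $a_cs_c,b_cs_c$, so $\min_{\Psi_p}f=\tfrac12(A^2+B^2)-\sqrt{P^2+Q^2}$; a short computation collapses the cross terms into $P^2+Q^2=\tfrac14\big(A^4+B^4-2A^2B^2\cos(2N_cs_c)\big)$, which is where $N_c=a_c+b_c$ enters decisively. Using $A^4+B^4-2A^2B^2\cos2\theta=(A^2+B^2)^2-4A^2B^2\cos^2\theta$ and rationalising, $$\min_{\Psi_p}f=\frac{2A^2B^2\cos^2(N_cs_c)}{(A^2+B^2)+\sqrt{(A^2+B^2)^2-4A^2B^2\cos^2(N_cs_c)}}\ \ge\ \frac{A^2B^2\cos^2(N_cs_c)}{A^2+B^2},$$ the last step because the denominator is at most $2(A^2+B^2)$. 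Multiplying the two bounds yields $D_{ij}\ \ge\ \dfrac{2AB\,|\cos(N_cs_c)|}{\sqrt{A^2+B^2}}\,|\sin(\Delta^{ij}_{min}/2)|$ for every pair on the ellipse $s_c$.

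Hence collisions are impossible as soon as $|\cos(N_cs_c)|\ge\mu$ with $\mu:=\dfrac{r_{dm}\sqrt{A^2+B^2}}{AB\,|\sin(\Delta^{ij}_{min}/2)|}$. If $\mu\ge1$ this never holds, but then $2\delta_s\ge\pi/N_c$, the spacing of the points $s_{diag}(k)=\tfrac{(2k-1)\pi}{2N_c}$, so $S_{avoid}=[0,2\pi)$ and the claim is vacuous; otherwise $|\cos(N_cs_c)|<\mu$ exactly on $\bigcup_{k}\big(s_{diag}(k)-\tfrac{\arcsin\mu}{N_c},\,s_{diag}(k)+\tfrac{\arcsin\mu}{N_c}\big)\pmod{2\pi}$ --- the $s_{diag}(k)$ being precisely the parameters where $\cos(N_cs)=0$ and the formation ellipse degenerates to a diagonal of the rectangle. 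Finally, $\arcsin$ is convex on $[0,1]$ with $\arcsin1=\pi/2$, so $\arcsin\mu\le\tfrac\pi2\mu$, giving $\tfrac{\arcsin\mu}{N_c}\le\tfrac{\pi\mu}{2N_c}=\delta_s$; thus the exact bad set lies inside $S_{avoid}$, and $s_c\notin S_{avoid}$ forces $|\cos(N_cs_c)|\ge\mu$ and hence collision-free transitions. The main obstacle is the trigonometric minimisation of $f(\Psi_p)$ and its collapse to the $\cos^2(N_cs_c)$ form, where a sign or identity slip would be fatal; the secondary care points are justifying the $|\sin\Psi_m|$ bound over the whole manoeuvre (and for non-adjacent pairs) and the final $\arcsin\mu\le\tfrac\pi2\mu$ estimate that passes from the exact safe set to the conservative $\delta_s$.
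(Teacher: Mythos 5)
Your proof is correct, but it reaches the key lower bound on $D_{ij}$ by a genuinely different route than the paper. The paper replaces $A^2\sin^2(\Psi_p-a_cs_c)$ and $B^2\cos^2(\Psi_p+b_cs_c)$ by parabolic lower bounds via the Jordan-type inequality $\sin^2 u\geq \tfrac{4}{\pi^2}u^2$ (its Lemma~\ref{clm_parabolic_lb}), minimises the resulting sum of two parabolas in $u=\Psi_p-a_cs_c$, and obtains $D_{ij_{min}}\geq \tfrac{4}{\pi}\tfrac{ABN_c}{\sqrt{A^2+B^2}}\left\vert\sin\left(\tfrac{\Delta^{ij}\psi}{2}\right)\right\vert\,\big\vert s_c-s_{diag}(k)\big\vert$, from which $\delta_s$ drops out directly by solving a linear inequality in $s_c$. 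You instead minimise the radial factor exactly over $\Psi_p$ via an amplitude--phase reduction, obtaining the closed form $\min_{\Psi_p}f=\tfrac{2A^2B^2\cos^2(N_cs_c)}{(A^2+B^2)+\sqrt{(A^2+B^2)^2-4A^2B^2\cos^2(N_cs_c)}}$ (your identity $P^2+Q^2=\tfrac14(A^4+B^4-2A^2B^2\cos 2N_cs_c)$ checks out), and only then relax, first the denominator to $2(A^2+B^2)$ and finally $\arcsin\mu\leq\tfrac{\pi}{2}\mu$ to recover exactly the stated $\delta_s$. Your route buys a sharper intermediate result --- an exact characterisation of the unsafe set as intervals of half-width $\arcsin(\mu)/N_c\leq\delta_s$ about the degenerate-ellipse parameters, which makes the conservatism of $S_{avoid}$ explicit and cleanly handles the vacuous case $\mu\geq 1$ (the paper's Remark~\ref{rem_feasible_ellipse}) --- at the cost of a heavier trigonometric computation; the paper's route is more elementary but requires (implicitly) placing the arguments of $\sin^2$ and $\cos^2$ in the validity window of the Jordan inequality. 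Both arguments share the same implicit monotonicity step $\left\vert\sin\left(\tfrac{\Delta^{ij}\psi}{2}\right)\right\vert\geq\left\vert\sin\left(\tfrac{\Delta^{ij}_{min}}{2}\right)\right\vert$, which needs $\Delta^{ij}\psi\leq\pi$; this holds for the adjacent-agent separations actually arising ($\tfrac{2\pi}{N_c}$ up to $\tfrac{4\pi}{N_c}$ with $N_c\geq 4$), so your phrasing is acceptable, though it deserves the same one-line justification in either write-up.
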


The proof of Proposition \ref{prop_ellipse_avoid} is given in the appendix.

\begin{remark} \label{rem_feasible_ellipse}In Proposition \ref{prop_ellipse_avoid} if the value of $\Delta^{ij}_{min}$ decreases, the $\delta_s$ increases as it is proportional to $\left\vert\sin\left(\frac{\Delta^{ij}_{min} }{2}\right)\right\vert^{-1}$. If $\delta_s\geq\frac{\pi}{2N_c}$   then no ellipse is feasible for collision-free transitions as $S_{avoid}=[0,\ 2\pi)$. Thus to have a  feasible elliptical locus for transitions we must ensure $\delta_s<\frac{\pi}{2N_c}$.
\end{remark}

 It will be shown in  later sections that for addition $\Delta^{ij}_{min}=\frac{2\pi}{N_d}$ and  for removal $\Delta^{ij}_{min}=\frac{2\pi}{N_c}$. Thus the stopping parameter value for agent $i_I$ is chosen as:
   \begin{align}
 s^{i_I}_{cf}=\begin{cases}  \begin{split} & s_{diag}(k') +\delta_s \mbox{, if }\tilde{s}_f \in S_{avoid}\mbox{ for } k=k'\\
& \tilde{s}_f \mbox{, otherwise, }
  \end{split} \end{cases}
\end{align}
where $s_{diag}$ and $S_{avoid}$ are as defined in Proposition  \ref{prop_ellipse_avoid}. The value of $s^{i_I}_{cf}$ is communicated to the remaining agents in the formation using the communication links and
  \begin{align}
 s^{i}_{cf}=s^{i_I}_{cf}.
 \label{eqn_sf_rem}
\end{align}
We assume that the transition is initiated for  all agents at the same time, i.e., $T^i_R\approx T^j_R$.
The formation agents use the monotone transition trajectory given by \eqref{eqn_g_tr}  (discussed in Section \ref{sec_prelims}) to smoothly decelerate to rest at $s^i_c=s^{i_I}_{cf}$. Thus  from \eqref{eqn_mono_traj_spec}, the constants  that characterize this trajectory for agent $i$  are:
$$T_0=T^i_R,\ \dot{g}_0= \dot{s}_{nom}, \ \dot{g}_f= 0,\ g_0= s^i_c(T_0)\mbox{ and }\ g_f=s^{i}_{cf}.$$



\subsubsection{ Parameter transformation}
\label{sec_param_tx} 
This step is carried out by each formation agent after completion of the monotone deceleration of $\dot{s}_c$ to $0$ (discussed in Section \ref{sec_mono_sc}).
 For brevity of notation, parameters  $s_c(t)$ and $\psi_c(t)$,  which are  functions of time,  are written as $s_c$ and $\psi_c$ respectively. This step is common to the agent addition and removal operations as both involve agent motion from Lissajous curve for $N_c$ agents to the Lissajous curve for $N_d$ agents along the formation ellipse, and is not necessary for agent replacement as $N_c=N_d$.  For such a motion to be possible, the agent positions on the current Lissajous curve and the assigned positions on the Lissajous curve for $N_d$ agents must lie on the same ellipse at any given instant. In other words, for $(a_c,b_c,s_c)$ and $(a_d,b_d,s_d)$, \eqref{eqn_lis_ellipse} must result in the same ellipse. This implies that $(a_c+b_c)s_c=(a_d+b_d)s_d$, or,   \begin{align}s_d=\frac{a_c+b_c}{a_d+b_d}s_c.
\label{eqn_s_transform}
\end{align}
 From \eqref{eqn_alt_pos_rep}, the position of agent $i$ on the Lissajous curve $(a_c,b_c)$ in terms of parameters $\psi^i_c$ and $s^i_c$  is:
$$(x_i,y_i)=\left(A\cos\left(\psi^i_c-a_cs^i_c\right),B\sin\left(\psi^i_c+b_cs^i_c\right)\right),$$
where $\psi^i_c$ is a constant . We can rewrite $\psi^i_c-a_cs^i_c=\psi^i_c+a_ds^i_d-a_cs^i_c-a_ds^i_d$ and $\psi^i_c+b_cs^i_c=\psi^i_c+b_cs^i_c-b_ds^i_d+b_ds^i_d$.
From \eqref{eqn_s_transform}, substituting $s^i_c=\frac{a_d+b_d}{a_c+b_c}s^i_d$ leads to
$$\psi^i_c-a_cs^i_c=\psi^i_d-a_ds^i_d \mbox{ and } \psi^i_c+b_cs^i_c=\psi^i_d+b_ds^i_d,$$where
\begin{align}
\psi^i_{d}=\psi^i_c+\frac{a_db_c-a_cb_d}{a_c+b_c} s^i_{d} \pmod{2\pi}. \label{eqn_psi_transform}
\end{align}
  
Thus $(x_i,y_i)=\left(A\cos\left(\psi^i_d-a_ds^i_d\right),B\sin\left(\psi^i_d+b_ds^i_d\right)\right)$ is unchanged under this transformation. Note that the number pairs $(a_c,b_c)$ and $(a_d,b_d)$ are  co-prime and both $a_c=a_d$ and $b_c=b_d$ cannot  hold simultaneously as ($N_c\neq N_d$). This implies that $\frac{a_c}{b_c}\neq\frac{a_d}{b_d}$   which means $\frac{a_db_c-a_cb_d}{a_c+b_c}\neq0$.
Thus we see that \eqref{eqn_s_transform} and \eqref{eqn_psi_transform} transform the parameters $s^i_c,\psi^i_c$ of the Lissajous curve with $(a_c,b_c,o_c)$ to the parameters $s^i_d,\psi^i_d$ of the Lissajous curve with $(a_d,b_d,o_d)$ without affecting the position coordinates of the agents. Since agents are at rest, from  \eqref{eqn_s_transform}, $\dot{s}^i_d=\dot{s}^i_c=0$ and $\dot{\psi}^i_d=\dot{\psi}^i_c=0$ for all $i\in\{1,...,N_c\}$.

\begin{remark}
The $\psi$ parameter separation between agents $i$ and $j$ along the ellipse is remains unchanged under the parameter transformation in \eqref{eqn_psi_transform} ,i.e.,
\begin{align}
\Delta^{ij}\psi(t)=\psi_c^j(t)-\psi_c^i(t)=\psi_d^j(t)-\psi_d^i(t).
\label{eqn_delta_ij_psi}
\end{align}
\end{remark}

 \subsubsection{ Symmetric transition of $\psi_d$}  
 \label{sec_sym_tr_psid}
For both agent addition and removal operations the formation agents move from the Lissajous curve for $N_c$ agents to the Lissajous curve for $N_d$ agents. Separate cooperative leader selection and transition assignment schemes are proposed for either operation in later sections that guarantee collision-free transition trajectories. These schemes assign a  destination  parameter value $\psi^i_D$ (on the Lissajous curve for $N_d$ agents) to each agent $i$, from the set 
\begin{align}
\Psi_D=\left\lbrace\frac{2\pi(p-1)}{N_d}+\frac{o_d\pi}{2}\bmod{2\pi}: p\in\{1,...,N_d\}\right\rbrace.
\label{eqn_set_psi_D}
\end{align}
These correspond to the agent formation positions on the Lissajous curve for $N_d$ agents from \eqref{eqn_alt_pos_rep}. Suppose the  formation agent $i$ is assigned destination parameter values at time $t=T^i_\Psi$. Each formation agent $i$ must travel the parametric displacement
to reach  the assigned $\psi^i_D$  value equal to  
\begin{align}\Delta^i_\psi=\psi^i_D-\psi^j_d(T_\Psi),
\label{eqn_delta_i_psi}
\end{align}
 For both addition and removal, the cooperative leader selection and transition assignment schemes also communicate necessary information to the formation agents that allows them to compute \begin{align}
 \Delta_{max}=\max_{i\in\{1,...,N_d\}} \vert \Delta^{i}_\psi\vert.
 \label{eqn_delta_max}
 \end{align}
 The agents then use the symmetric transition trajectory given by  \eqref{eqn_g_s_tr} to move to the positions corresponding to the assigned $\psi^i_D$  values. For this transition, $\dot{s}^i_d=0$ and as a result the agent speed depends only on $\vert \dot{\psi}^i_d \vert$. From \eqref{eqn_V_alt_param_a2b2}, $V^i<\sqrt{A^2+B^2}\vert\dot{\psi}^i_d\vert$. Therefore, $V^i<V_{max}$, if  $\vert\dot{\psi}^i_d\vert\leq\dot{\psi}_{max}$, where $\dot{\psi}_{max}=\frac{V_{max}}{\sqrt{A^2+B^2}}$.
Thus from \eqref{eqn_sym_gdot_max}, considering peak parametric speed as  $\dot{g}_{max}=\dot{\psi}_{max}$ for parametric interval $\vert g_f-g_0\vert =\Delta_{max}$, rearrangement of the equation gives $$T_p= \frac{15\Delta_{max}}{8V_{max}}\sqrt{A^2+B^2}.$$ Using $T_p$ as transition time period in \eqref{eqn_g_s_tr} for the agent $i_m=\argmax_{i\in\{1,...,N_d\}} \vert\Delta^{i}_\psi\vert$ guarantees that its maximum parametric speed is $\dot{\psi}_{max}=\frac{15\Delta_{max}}{8T_p}$ (hence limiting physical speed below $V_{max}$).
We also use $T_p$ as the transition time period for the remaining formation agents. Thus from \eqref{eqn_sym_gdot_max}, for all $i\neq i_m$, $\vert\dot{\psi}^i_{d_{max}}\vert=\vert\dot{\psi}^i_{d_{min}}\vert=\frac{15\Delta^i_{\psi}}{8T_p}\leq\frac{15\Delta_{max}}{8T_p}=\dot{\psi}_{max}$.
Hence  speeds of all the agents are bounded above by $V_{max}$.

From \eqref{eqn_sym_traj_spec}, the constants that characterise the symmetric trajectory in \eqref{eqn_g_s_tr} for agent $i$ are
\begin{align}
T_0=T^{i}_{\psi},\ T_p= \frac{15\Delta_{max}\sqrt{A^2+B^2}}{8
V_{max}},\ g_0=\psi^i_d(T_0),\ g_f=\psi^i_D. 
\label{eqn_sym_tr_boundary}
\end{align}

For both addition and removal the destination parameter values in $\Psi_D$ are assigned such that the symmetric transitions along the ellipse are in the same direction. Assuming negligible communication delays, the transition start times for formation agents $i,j$ ($i\neq j $) satisfy $T^i_\Psi\approx T^j_\Psi=T_0$ . As a consequence we have the following result for the relative parametric displacement $\Delta^{ij}\psi$:

\begin{prop}
\label{prop_sym_mono_sep}
For formation agents $i,j$ moving along the symmetric transition  trajectories for parameter $\psi_d$ characterised by \eqref{eqn_sym_tr_boundary} for the time window $t\in[T_0,\ T_f]$,   $\Delta^{ij}\psi(t)$ given by \eqref{eqn_delta_ij_psi} is monotone in nature and achieves its extremal values at $t=T_0,T_f$.
\end{prop}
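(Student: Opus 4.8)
The plan is to reduce the claim about $\Delta^{ij}\psi(t)$ to the monotonicity structure already established for a single symmetric transition trajectory, and then exploit linearity. First I would write, using Remark on the invariance of the $\psi$-separation, $\Delta^{ij}\psi(t)=\psi^j_d(t)-\psi^i_d(t)$, where both $\psi^i_d$ and $\psi^j_d$ evolve according to the symmetric transition trajectory \eqref{eqn_g_s_tr} with the \emph{same} start time $T_0$ and the \emph{same} period $T_p$ (this is exactly the content of \eqref{eqn_sym_tr_boundary}, together with the assumption $T^i_\Psi\approx T^j_\Psi = T_0$). Consequently the two trajectories share the common time-dependent factor; explicitly, from \eqref{eqn_g_s_tr},
\[
\psi^k_d(t)=\Delta^k_\psi\left(10T_p^2-15T_p\Delta t+6\Delta t^2\right)\frac{\Delta t^3}{T_p^5}+\psi^k_d(T_0),\qquad k\in\{i,j\},
\]
so subtracting gives
\[
\Delta^{ij}\psi(t)=\left(\Delta^j_\psi-\Delta^i_\psi\right)\left(10T_p^2-15T_p\Delta t+6\Delta t^2\right)\frac{\Delta t^3}{T_p^5}+\Delta^{ij}\psi(T_0).
\]
Thus $\Delta^{ij}\psi(t)$ is itself, up to an additive constant, a symmetric transition trajectory of the form \eqref{eqn_g_s_tr} with parametric displacement $\Delta g = \Delta^j_\psi-\Delta^i_\psi$.

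Next I would invoke the properties of $g_s$ already listed after \eqref{eqn_g_s_3dot_tr}: its derivative $\dot g_s(t)=\frac{30\Delta g}{T_p^5}\Delta t^2(T_p-\Delta t)^2$ does not change sign on $[T_0,T_f]$ (it equals $\frac{\operatorname{sign}(\Delta g)}{}$ times a nonnegative quantity), vanishing only at the endpoints $t=T_0,T_f$ (and instantaneously at the midpoint, where it does not change sign). Hence $\Delta^{ij}\psi(t)$ is monotone on $[T_0,T_f]$ — increasing if $\Delta^j_\psi>\Delta^i_\psi$, decreasing if $\Delta^j_\psi<\Delta^i_\psi$, constant if they are equal — and therefore attains its extremal values precisely at $t=T_0$ and $t=T_f$. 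This is exactly the assertion of the proposition. A minor point worth stating is that the "same direction" property of the assigned destinations in $\Psi_D$ guarantees all $\Delta^k_\psi$ share a sign, which is what makes the separation well-behaved in the subsequent collision-avoidance arguments, but it is not actually needed for monotonicity of the pairwise difference itself.

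The only real subtlety — and the step I would treat most carefully — is justifying that the two agents genuinely use a \emph{common} $\Delta t$ and $T_p$, i.e. that the difference of two symmetric trajectories is again of the form \eqref{eqn_g_s_tr}. This rests on (i) $T_p$ being computed globally from $\Delta_{max}$ in \eqref{eqn_sym_tr_boundary}, identical for every agent, and (ii) the synchronization assumption $T^i_\Psi\approx T^j_\Psi$, which lets us take a single $T_0$ and hence a single $\Delta t = t-T_0$. Granting these, the subtraction is immediate and the rest is just reading off the sign of the already-computed $\dot g_s$. I would also remark that the endpoint evaluations give the clean boundary values $\Delta^{ij}\psi(T_0)$ and $\Delta^{ij}\psi(T_f)=\Delta^{ij}\psi(T_0)+(\Delta^j_\psi-\Delta^i_\psi) = \psi^j_D-\psi^i_D$, which will be the useful handles when this proposition is applied to bound inter-agent distances during the transition.
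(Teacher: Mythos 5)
Your proposal is correct and follows essentially the same route as the paper: the paper likewise subtracts the two trajectories (at the level of the derivatives, obtaining $\Delta^{ij}\dot{\psi}(t)=\frac{30(\Delta^j_{\psi}-\Delta^i_{\psi})}{T_p^5}\Delta t^2(T_p-\Delta t)^2$) and reads off the sign to conclude monotonicity with extrema at $t=T_0,T_f$. One small slip in a parenthetical: $\dot{g}_s$ does \emph{not} vanish at the midpoint $t=T_0+\tfrac{T_p}{2}$ (it attains its maximum magnitude there; only $\ddot{g}_s$ vanishes), but this does not affect your argument.
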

\begin{proof}
 Since symmetric transition trajectory for reconfiguration is initiated at $T_0=T^{i}_{\psi}\approx T^{j}_{\psi}$ and ends at $T_f=T_0+T_p$,  from \eqref{eqn_g_s_dot_tr}, the relative parametric speed between agents $i$ and $j$, is given by
\begin{align}
\Delta^{ij}{\dot{\psi}}(t)=\dot{\psi}^j_d(t)-\dot{\psi}^i_d(t)= \frac{30(\Delta^j_{\psi}-\Delta^i_{\psi})}{T_p^5}\Delta t^2(T_p - \Delta t)^2,
\label{eqn_psi_rel_vel}
\end{align}
where $\Delta t=t-T_0$, $\Delta^j_\psi$ and $\Delta^i_\psi$ are given by \eqref{eqn_delta_i_psi}.
Thus from \eqref{eqn_psi_rel_vel}, for  $t\in (T_0,\ T_f)$,
$ \Delta^{ij}{\dot{\psi}}(t)> 0 \mbox{, if }  \Delta_\psi^j-\Delta_\psi^i> 0 \mbox{ and }\Delta^{ij}{\dot{\psi}}(t)< 0 \mbox{, if }  \Delta_\psi^j-\Delta_\psi^i< 0 $. This proves that
$\Delta^{ij}{\psi}(t)={\psi}^j_d(t)-{\psi}^i_d(t)$ is monotone in nature and its extremal values are attained at  $t=T_0,T_f$. 
\end{proof}
\subsubsection{\  Monotone acceleration of  $s_d$}\label{sec_mono_sd}
This step is common to all three reconfiguration operations. After completing the symmetric transition trajectory along the elliptical locus for agent addition and removal operations, and agent exchange for the replacement operation, all the $N_d$ agents have reached their destination $\psi^i_D$ values on the Lissajous curve selected for $N_d$ agents, and are at rest. Now the agents accelerate along this curve using the monotone transition trajectory for parameter  $s_d$ given by \eqref{eqn_g_tr} to resume performing the proposed surveillance strategy with $N_d$ agents with parametric speed $\dot{s}_d=\dot{s}_{nom}$ according to \eqref{eqn_sdot_choice} for the new Lissajous curve with $(a_d,b_d,o_d)$. This is initiated by an initiator agent $i_I$ (operation specific)  via the communication links. We assume that the transition is initiated by the leader $i_L$ at $T_0=T_a^i$  for agent $i$ (where $T^i_a\approx T^j_a$ for $i\neq j$ and $i,j\in\{1,...,N_d\}$). From \eqref{eqn_mono_traj_spec} the constants that specify this trajectory  in \eqref{eqn_g_tr} for  each agent  $i$ are
\begin{align}
T_0=T_a^i, \ g_0=s^i_d(T_0) , \  g_f=g_0+\frac{\pi}{8N_d},\  \dot{g}_0=0,\   
 \dot{g}_f=\dot{s}_{nom}.
\end{align}



 \subsubsection{ Symmetric transition to way-point} \label{sec_sym_wp} Here an agent moves from its initial position $P_0$ with coordinates $(x_0,y_0)$, to a final way-point position $P_f$ with coordinates $(x_f,y_f)$ by performing a symmetric transition trajectory (given by \eqref{eqn_g_s_tr}) for the displacement along the vector $\overrightarrow{P_0P_f}$ ( having length $ d_f=\sqrt{(x_f-x_0)^2+(y_f-y_0)^2}$ ). Assuming this transition is done over a time window $[T_0,\ T_f]$ with $T_f=T_0+T_p$ where  $T_p\geq\frac{15d_f}{8V_{max}}$, then from \eqref{eqn_sym_gdot_max} selecting the constants 
$$T_0=T^{i_a}_0,\ T_p=T^{i_a}_p,\ g_0=0 \mbox{ and }\ g_f=d_f$$
in \eqref{eqn_sym_traj_spec} for the trajectory given by  \eqref{eqn_g_s_tr}, ensures that the agent speed is bounded above by $V_{max}$.

We now discuss the  reconfiguration steps involved in each of the three operations separately. 

\subsection{ Agent removal} 
\label{sec_agent_removal}
The removed agent $i_r$ has two adjacent neighbours $i_p$ and $i_n$ on the formation ellipse  with parameter values $\psi^{i_n}_c=\psi^{i_r}_c+\frac{2\pi}{N_c} \bmod{2\pi}$ and $\psi^{i_p}_c=\psi^{i_r}_c-\frac{2\pi}{N_c} \bmod{2\pi}$ respectively. The algorithmic  sketch of the steps for the agent removal operation are as follows: 
\begin{algorithmic}[1]
\Statex {\bf Initial condition: } Proposed formations of $N_c$ agents  moving on the Lissajous curve  for surveillance at altitude $h_F$.
\State {\bf Removal initialisation: }Agent $i_r$ stops communication lowers altitude at  $t=T_R$ to $h_L<h_F$,  and returns to  base to land. Number of formation agents remaining $N_d=N_c-1$. 
\State  {\bf Monotone deceleration of parameter $s_c$:} initiated by agent $i_n$ for $N_d$ formation agents. 
\State{\bf Parameter transformation: }  from $(s_c,\psi_c)$ to $(s_d,\psi_d)$, done when  $\dot{s}^i_c=0$ for all formation agents $i$.
\State{\bf Leader selection:} Leader agent $i_L$ selected from $\{i_n,i_p\}$
\State{\bf Transition assignment: }Destination positions on the Lissajous curve for $N_d$ agents are assigned to formation agents by leader $i_L$
\State{\bf Symmetric transition trajectory of  parameter $\psi_d$: }Agents move along formation ellipse to reach assigned destination positions on the Lissajous curve for $N_d$ agents.
\State{\bf Monotone acceleration of parameter $s_d$:} initiated by agent $i_n$ after  the previous step, $N_d$ formation agents accelerate  along the Lissajous curve for $N_d$ agents to resume area surveillance. 
\end{algorithmic}
 The reconfiguration steps for the formation agents unique to the agent removal operation are as follows:\vspace{-0.1cm}
\subsubsection{  Removal initialisation}\label{sec_removal_init}
When agent $i_r$ is removed from the formation of $N_c$ agents at time $t=T_R$, its next agent $i_n$  alerts the remaining agents about the removal of agent $i_r$ via the communication links. We assume that the remaining $N_d=N_c-1$ formation agents (having indices $j\in\{1,...,N_c\}\setminus\{i_r\}$) are updated about the removal at time $t=T_R^j\approx T_R$. This is followed by the monotone deceleration of the parameter $s_c$ for which agent $i_n$ is the initiator agent $i_I$ (refer Section \ref{sec_mono_sd}).

\subsubsection{  Leader selection and transition assignment}\label{sec_leader_rem}
For the agent removal operation $N_d=N_c-1$, assuming agent $i_r$ is removed, the leader agent $i_L$ is selected from the agents $i_n$ and $i_p$ (neighbours of $i_r$ in the formation with parameter values  $\psi^{i_n}_c=\psi^i_c+\frac{2\pi}{N_c} \bmod{2\pi}$  and $\psi^{i_p}_c=\psi^i_c-\frac{2\pi}{N_c} \bmod{2\pi},$ respectively). If agent $i_n$ is selected as the leader (as shown in Fig \ref{fig_reconfig_rem}) then after the  parameter transformation, the  direction of transition  along the ellipse is selected in the  direction of the increasing $\psi_d$  parameter  (,i.e., $\dot{\psi}_d>0$) as shown in the Fig.\ \ref{fig_reconfig_rem}, as it is guaranteed to have an adjacent agent within its communication range of $r_{com}$ (at parameter $\psi_c=\psi^{i_n}_c+\frac{2\pi}{N_c} \bmod{2\pi}$ before the  parameter transformation).
Similarly, if agent $i_p$ is selected as the leader then the  direction of transition  along the ellipse is selected in the  direction of the decreasing $\psi_d$ parameter (i.e., $\dot{\psi}_d<0$).
 \begin{figure}[!h]
\centering
\includegraphics[width=0.8\linewidth]{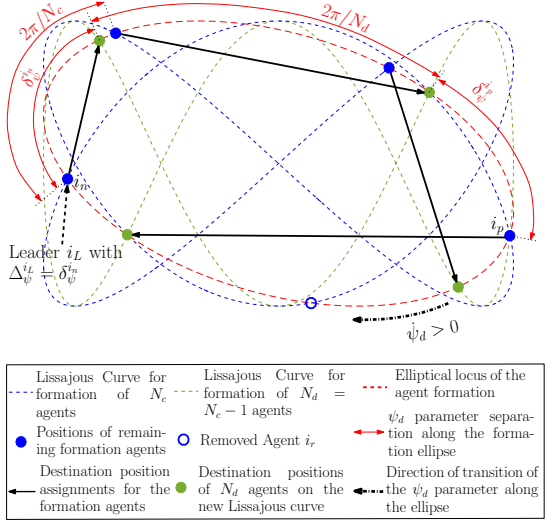}\\
\caption{Transition assignment example for the removal of an  agent from a $N_c=5$ agent formation for leader $i_L=i_n$.}
\label{fig_reconfig_rem}
\end{figure}
The destination parameter values on the Lissajous curve for $N_d$ agents lie in the set $\Psi_D$  given by \eqref{eqn_set_psi_D}. The element in $\Psi_D$ which is nearest to $i_L$ along the chosen direction of transition is selected as the destination parameter value for the leader $i_L$. For agent $i_n$ (shown in Fig.\ \ref{fig_reconfig_rem}), the nearest element of $\Psi_D$ encountered along the $\dot{\psi}_d>0$ direction is at $\psi^n_{cl}=\left\lceil\left(\psi^{i_n}_d-\frac{o_d\pi}{2}\right)\frac{N_d}{2\pi}\right\rceil\frac{2\pi}{N_d}+\frac{o_d\pi}{2}$, and parametric interval $\delta^{i_n}_\psi=\psi^n_{cl}-\psi^{i_n}_d$ as shown in Fig.\ \ref{fig_reconfig_rem}.
 Similarly for agent $i_p$, the nearest element of $\Psi_D$ encountered along the $\dot{\psi}_d<0$ direction is at $\psi^p_{cl}=\left\lfloor\left(\psi^{i_p}_d-\frac{o_d\pi}{2}\right)\frac{N_d}{2\pi}\right\rfloor\frac{2\pi}{N_d}+\frac{o_d\pi}{2} $, and  parametric interval $\delta^{i_p}_\psi=\psi^{i_p}_d-\psi^p_{cl}$ as shown in Fig.\ \ref{fig_reconfig_rem}. Both agents $i_n$ and $i_p$ compute the values of $\delta^{i_p}_\psi$ and $\delta^{i_n}_\psi$, and the leader agent $i_L$ is selected as follows:
 \begin{align}
i_L=\begin{cases} i_n, \mbox{ if }\delta^{i_n}_\psi\leq \delta^{i_p}_\psi,
\\
i_p, \mbox{ if }\delta^{i_n}_\psi> \delta^{i_p}_\psi.
\end{cases}
\label{eqn_lead_select_rem}
\end{align}
  The destination parameter $\psi^{i_L}_D$ and the corresponding parametric distance $\vert\Delta^i_\psi\vert$ (given by \eqref{eqn_delta_i_psi}) for the leader $i_L$   is selected as
\begin{align}
(\psi^{i_L}_D,\vert\Delta^{i_L}_\psi\vert) =\begin{cases} (\psi^n_{cl}, \delta^{i_n}_\psi),  \mbox{ if }i_L=i_n,  \\ (\psi^p_{cl}, \delta^{i_p}_\psi), \mbox{ if }i_L=i_p. \end{cases}
\label{eqn_dest_leader}
\end{align}
As a consequence of choices in  \eqref{eqn_lead_select_rem} and \eqref{eqn_dest_leader}, the selected leader has  smaller value of the parametric transition distance  $\vert\Delta^{i_L}_\psi\vert$ assigned for reconfiguration. The leader communicates  $\psi^{i_L}_d$ and $\psi^{i_L}_D$ values to the remaining formation agents using the communication links between adjacent agents.
  The subsequent values in $\Psi_D$ are assigned as destination parameter values to the subsequent agents, in the sequence in which they are encountered on the formation ellipse along the $\dot{\psi}_d<0$ direction for $i_L=i_p$ and $\dot{\psi}_d>0$ direction for $i_L=i_n$ (shown in Fig.\ \ref{fig_reconfig_rem}). Mathematically this can be written as
 \begin{align}
\psi^{i}_D=\begin{cases}\psi^{i_L}_D+\frac{2\pi n_i}{N_d}\bmod{2\pi},\mbox{ if }i_L=i_n, \\
\psi^{i_L}_D-\frac{2\pi n_i}{N_d}\bmod{2\pi},\mbox{ if }i_L=i_p,
\end{cases}
\label{eqn_dest_non_leader}
\end{align}
where $n_i$ is the count of agent $i$ relative to $i_L$ along the transition direction, and is computed as
\begin{align}
 n_i=\begin{cases}(\psi^i_d-\psi^{i_L}_d \bmod{2\pi} )\frac{N_c}{2\pi}, \mbox{ if } i_L=i_n,\\
 		(\psi^{i_L}_d-\psi^i_d \bmod{2\pi} )\frac{N_c}{2\pi},\mbox{ if } i_L=i_p.\end{cases}
 		\label{eqn_ni_rem}
 \end{align}
 Here $n_i\leq N_d-1$ is an integer because prior to any transition along the ellipse,  $\vert \psi^i-\psi^{i_L}\vert$ is an integer multiple of $\frac{2\pi}{N_c}$ for the agents $i\in\{1,...,N_c\}\setminus\{i_r,i_L\}$.

For the destination parameter assignment given by \eqref{eqn_dest_leader} and \eqref{eqn_dest_non_leader}, $\vert\Delta^{i}_\psi\vert$ (given by \eqref{eqn_delta_i_psi}) is the parametric transition distance that each agent must move along the formation ellipse to reach the assigned destination value $\psi^i_D$. Suppose the leader agent $i_L=i_n$. Then  as illustrated in Fig.\ \ref{fig_reconfig_rem}, the values of $\vert\Delta^{i}_\psi\vert$ for agents $i\in\{1,...,N_c\}\setminus\{i_r\}$ are given by
\begin{align}
\vert\Delta^{i}_\psi\vert=
	\vert\Delta^{i_L}_\psi\vert+n_i\left(\frac{2\pi}{N_d}- \frac{2\pi}{N_c}\right).
	\label{eqn_del_ni_psi_rem}
\end{align}
Equation \eqref{eqn_del_ni_psi_rem}  also holds for the case $i_L=i_p$. From \eqref{eqn_del_ni_psi_rem}, for the agent removal case, the longest parametric transition distance $\Delta_{max}$ in \eqref{eqn_delta_max} is obtained for $n_i=N_d-1$ and $N_d=N_c-1$, and is given by $$\Delta_{max}=\vert\Delta^{i_L}_\psi\vert+2\pi\frac{N_c-2}{N_c(N_c-1)}.$$
This is followed by the symmetric transition in the $\psi_d$ parameter for the $N_d$ formation agents as discussed in Section \ref{sec_sym_tr_psid}.

\begin{claim} \label{clm_no_col_rem}
The destination parameter assignment scheme given by \eqref{eqn_dest_non_leader} guarantees collision-free symmetric transition trajectories in parameter $\psi_d$ for the $N_d$ formation agents.
\end{claim}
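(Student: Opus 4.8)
\textbf{Proof proposal for Claim \ref{clm_no_col_rem}.}

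The plan is to show that no two formation agents collide during the symmetric transition in $\psi_d$ by bounding the Euclidean inter-agent distance $D_{ij}(t)$ from below, and comparing this bound with the circular-hull size bound $r_{dm}$. First I would observe that, by construction, all agents start at rest on the Lissajous curve for $N_c$ agents, execute the symmetric transition trajectory \eqref{eqn_g_s_tr} for $\psi_d$ over a common time window $[T_0,\ T_f]$ (since $T^i_\Psi\approx T^j_\Psi$), and end at rest on the Lissajous curve for $N_d$ agents with $\psi^i_d=\psi^i_D\in\Psi_D$. During this transition $\dot{s}^i_d=0$, so $s_d$ is a fixed constant $\tilde s$ for all agents and the agents remain on the single fixed formation ellipse \eqref{eqn_lis_ellipse} at all times. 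Hence the problem reduces to a one-dimensional collision analysis: the agents move monotonically along a fixed ellipse, and it suffices to control the $\psi_d$-separation $\Delta^{ij}\psi(t)$ between any pair.

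The key step is to combine Proposition \ref{prop_sym_mono_sep} with the endpoint data. By Proposition \ref{prop_sym_mono_sep}, $\Delta^{ij}\psi(t)$ is monotone on $[T_0,\ T_f]$ and attains its extrema at the endpoints; at $t=T_0$ the separation is an integer multiple of $\frac{2\pi}{N_c}$ (the initial equi-parametric spacing, by Remark on \eqref{eqn_delta_ij_psi}), and at $t=T_f$ the separation is an integer multiple of $\frac{2\pi}{N_d}$ (the target spacing from $\Psi_D$, since the destination assignment \eqref{eqn_dest_non_leader} places distinct agents at distinct elements of $\Psi_D$). Therefore, for all $t$, $\vert\Delta^{ij}\psi(t)\vert$ lies between two consecutive such multiples and in particular is bounded below by $\min\{\frac{2\pi}{N_c},\frac{2\pi}{N_d}\}=\frac{2\pi}{N_c}$ for every pair $i\neq j$ that is adjacent along the transition ordering; for non-adjacent pairs the separation is strictly larger. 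This is where I would need to be careful: I must verify that the assignment \eqref{eqn_dest_non_leader}, together with the transition counts $n_i$ in \eqref{eqn_ni_rem} and the single-direction nature of all transitions, really does preserve the cyclic order of the agents along the ellipse, so that the endpoint separations of \emph{consecutive} agents are exactly one step ($\frac{2\pi}{N_c}$ at the start, $\frac{2\pi}{N_d}$ at the end) and never collapse to zero en route. The monotonicity from Proposition \ref{prop_sym_mono_sep} is exactly what rules out any interior dip below the endpoint minimum.

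Having established $\vert\Delta^{ij}\psi(t)\vert\ge\frac{2\pi}{N_c}>0$, so in particular $\Delta^{ij}_{\min}=\frac{2\pi}{N_c}$ for adjacent agents (matching the value asserted in the text for the removal case), I would then invoke Proposition \ref{prop_ellipse_avoid}: because the stopping parameter $s^{i_I}_{cf}$ was chosen precisely so that $s_c=\tilde s\notin S_{avoid}$ (and correspondingly $s_d=\tilde s$ after the transformation \eqref{eqn_s_transform}, which preserves the ellipse), the fixed formation ellipse is a \emph{feasible} ellipse for collision-free transitions of agents with hull bound $r_{dm}$. Concretely, translating the angular separation $\Delta^{ij}\psi$ into Euclidean distance via \eqref{eqn_dist_ij} and using that $s_d$ avoids the diagonal-degenerate neighbourhoods, the minimal chordal distance along the ellipse between consecutive agents stays at least $2r_{dm}$, so the circular hulls never overlap. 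The main obstacle I anticipate is purely bookkeeping: confirming that the assignment rule keeps the agents in their cyclic order (no overtaking), so that pairwise separations are governed by the consecutive-agent case; once that is pinned down, monotonicity (Proposition \ref{prop_sym_mono_sep}) plus the feasible-ellipse property (Proposition \ref{prop_ellipse_avoid}) close the argument.
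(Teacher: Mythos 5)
Your proposal is correct and follows essentially the same route as the paper's proof: Proposition \ref{prop_sym_mono_sep} together with the endpoint separations (noting the doubled initial gap $\frac{4\pi}{N_c}$ at the removed agent's position and the final spacing $\frac{2\pi}{N_d}>\frac{2\pi}{N_c}$) gives $\Delta^{ij}_{min}=\frac{2\pi}{N_c}$, and Proposition \ref{prop_ellipse_avoid} applied on the feasible stopping ellipse then yields collision-freeness. The only step the paper makes explicit that you leave implicit is the verification, via \eqref{eqn_rdm_reconfig}, that $\delta_s<\frac{\pi}{2N_c}$, so that $S_{avoid}\neq[0,\ 2\pi)$ and a feasible stopping ellipse actually exists (cf.\ Remark \ref{rem_feasible_ellipse}).
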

\begin{proof}
 Prior to the agent removal,  $\vert\Delta^{ij}\psi(t)\vert=\vert\psi_c^j(t)-\psi_c^i(t)\vert=\frac{2\pi}{N_c}$ for adjacent agents $i$ and $j$. We assume that agent $j$ succeeds agent $i$ along the selected direction of transition, i.e., $i,j$ satisfy $n_j=n_i+1$ in \eqref{eqn_ni_rem}. 
Thus in the time interval $[T_0,\ T_f]$ for the symmetric transition of formation agents, with $T_0=T^i_\Psi \approx T^j_\Psi$ and $T_f=T_0+T_p$, the initial and final parametric displacement (in parameter $\psi_d$) from agent $i$ to the adjacent agent on the formation ellipse along the direction of transition is given by:
\begin{align}
 \Delta^{ij}\psi(T_0)=\begin{cases}\frac{4\pi}{N_c} \mbox{ if }i= i_p\\
 \frac{2\pi}{N_c} \mbox{ if }i\neq i_p 
  \end{cases}\hspace{-0.4cm}, \Delta^{ij}\psi(T_f)= \frac{2\pi}{N_d} \mbox{ for } i_L=i_n,\label{eqn_deltaij_psi_ilin_rem}\\ 
   \Delta^{ij}\psi(T_0)=\begin{cases}\frac{-4\pi}{N_c} \mbox{ if }i= i_n \\
 \frac{-2\pi}{N_c} \mbox{ if }i\neq i_n
  \end{cases}\hspace{-0.4cm},\Delta^{ij}\psi(T_f)= \frac{-2\pi}{N_d} \mbox{ for } i_L=i_p. \label{eqn_deltaij_psi_ilip_rem}
 \end{align}

From Proposition \ref{prop_sym_mono_sep}, we know that $\Delta^{ij}\psi(t)$ is monotone for  $t\in [T_0,\ T_f]$ and attains its maximum and minimum values at $T_0$ or $T_f$. Thus from \eqref{eqn_deltaij_psi_ilin_rem} and \eqref{eqn_deltaij_psi_ilip_rem},  the  minimum value of $\vert\Delta^{ij}\psi(t)\vert=\Delta^{ij}_{min}=\frac{2\pi}{N_c}$ for the agent removal operation.
From Proposition \ref{prop_ellipse_avoid} and   \eqref{eqn_rdm_reconfig}, 
$\delta_s=\frac{\pi}{2N_c}\frac{\sqrt{A^2+B^2}}{\sqrt{A^2 a_j^2+B^2 b_j^2}}\frac{\sin\left(\frac{\pi}{N_{max}}\right)}{\left\vert\sin\left(\frac{\pi }{N_c}\right)\right\vert}.$

Since $N_c \leq N_{max}$ and $(a_j,b_j)$ are co-prime positive integers, $\delta_s<\frac{\pi}{2N_c}$. Moreover, as the formation agents stop on a feasible ellipse with $s_c\neq S_{avoid}$ (which is ensured  as discussed in Section \ref{sec_mono_sc}), the symmetric transition trajectories in the parameter $\psi_d$ for the proposed transition assignment scheme are collision-free. 
\end{proof}\vspace{-0.7cm}

\subsection{ Agent addition }
\label{sec_agent_addition}
 For the agent addition operation, $i_p$ and $i_n$ are the formation agents parametrically preceding and succeeding the assigned formation position for added agent $i_a$ respectively. The algorithmic  sketch of the steps for adding agent $i_a$ to a formation of $N_c$ agents to get a formation of $N_d=N_c+1$ agents  are as follows: 

\begin{algorithmic}[1]

\Statex {\bf Initial condition: } Proposed formations of $N_c$ agents  moving on the Lissajous curve  for surveillance at altitude $h_F$.
\State {\bf Addition Initialisation: }Agent $i_a$ waiting at height $h_L$, alerts the closest formation agent $i_c$ to initiate the agent addition operation when in communication range.
\State  {\bf Monotone deceleration of parameter $s_c$:} initiated by agent $i_c$ at $t=T_R$ for all formation agents. 
\State {\bf Parameter update for added agent:}  The formation agents  cooperatively calculate the $s^{i_a}_d$, $\psi^{i_a}_d$ parameters for the formation position of agent $i_a$, and communicate the same to $i_a$.
$i_a$ moves to its assigned formation position at height $h_L$

\State{\bf Parameter transformation: }  From $(s_c,\psi_c)$ to $(s_d,\psi_d)$, done for formation agents $i$ at height $h_F$ when  $\dot{s}^i_c=0$.
\State{\bf Leader selection:} Leader agent $i_L$ is selected from $\{i_n,i_p\}$
\State{\bf Transition assignment: }Destination positions on the Lissajous curve for $N_d$ agents are assigned to formation agents by $i_L$.
\State{\bf Symmetric transition trajectory of  parameter $\psi_d$: }Agents move along formation ellipse to reach assigned destination positions on new Lissajous curve.
\State{\bf Agent $i_a$ enters formation:} During the symmetric transition trajectory, $i_a$ rises to height $h_F$ when the ascent is collision-free.
\State{\bf Monotone acceleration of parameter $s_d$:} initiated by agent $i_a$ after  the previous two steps. The $N_d$ formation agents accelerate  along the Lissajous curve for $N_d$ agents to resume area surveillance. 

\end{algorithmic}
 We now discuss the reconfiguration steps  specific to the addition operation for the  added agent $i_a$ and the formation agents.
\subsubsection{  Addition initialisation}\label{sec_add_init} The agent to be added to the formation is launched from the home base and it hovers at altitude $h_L <h_F$. When formation agent $i_c$ is in communication range, it is alerted by $i_a$ to initiate the agent addition operation. Agent $i_c$ then initiates the monotone deceleration of the parameter $s_c$ for the $N_c$ formation agents.

\subsubsection{  Parameter update for added agent}\label{sec_param_update_ia}
The formation agents calculate the position of entry on the elliptical locus for the added new agent and communicate the corresponding $\psi_d$ and $s_d$ values to the added agent. The calculation of these values is done considering Lemma \ref{lem_ellipse_partitions} (proof given in the Appendix).

\begin{lemma}
Suppose $P_1,P_2,...,P_N$  are $N$ parametrically equi-spaced points on a closed curve $C $ with the convention $P_{N+1}=P_1$, and  $Q_1,Q_2,...,Q_{N+1}$  are $N+1$ parametrically equi-spaced points on the same curve $C$ with  convention $Q_{N+2}=Q_1$. Then there is exactly one pair of adjacent $Q$ points contained in the interval $[P_{i},\  P_{i+1})$ of adjacent $P$ points for some $i \in \{1,...,N\}$.
\label{lem_ellipse_partitions}
\end{lemma}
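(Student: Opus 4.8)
The plan is to reduce the statement to a counting argument about how $N+1$ parametrically equi-spaced points distribute among $N$ equal half-open arcs of the curve. First I would fix a parametrization of the closed curve $C$ by $\mathbb{R}/L\mathbb{Z}$; after an affine reparametrization we may take $P_i$ at parameter values $p_i=(i-1)L/N$ and $Q_m$ at parameter values $q_m=\theta+(m-1)L/(N+1)$ for some offset $\theta$, so consecutive $P$'s are $L/N$ apart and consecutive $Q$'s are $L/(N+1)$ apart. The arcs $I_i:=[P_i,P_{i+1})$, $i=1,\dots,N$, form a partition of $C$, so each $Q_m$ lies in exactly one $I_i$ — this is automatic and side-steps any coincidence $Q_m=P_i$. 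Let $n_i$ be the number of $Q$-points in $I_i$, so $\sum_{i=1}^N n_i=N+1$. The whole lemma follows once I show that every $n_i\in\{1,2\}$: writing $t=\#\{i:n_i=2\}$ and $s=\#\{i:n_i=1\}$ gives $s+t=N$ and $s+2t=N+1$, hence $t=1$.

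Next I would establish the two bounds. For $n_i\ge 1$: if some $I_i$ contained no $Q$-point, then all $N+1$ of the $Q_m$ would lie in the complementary arc $C\setminus I_i$, whose parameter-length is $L-L/N$; but the shortest arc containing all $N+1$ equi-spaced $Q$-points has length $L-L/(N+1)$ (the full period minus one inter-$Q$ gap, all gaps being equal), and $L-L/N<L-L/(N+1)$ — a contradiction. For $n_i\le 2$: since $I_i$ is an interval of parameter-length $L/N<L$ (it does not wrap around $C$), the $Q$-indices it contains form a cyclically consecutive block; if there were $\ge 3$ of them, the first and last would be at parameter-distance $\ge 2L/(N+1)$ while both lying inside $I_i$, forcing $2L/(N+1)<L/N$, i.e.\ $N<1$, which is false (for $N=1$ the statement is checked directly). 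Hence $n_i\in\{1,2\}$ for all $i$.

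Finally I conclude: by the count above exactly one arc, say $I_{i^\star}$, contains two $Q$-points, and by cyclic consecutivity those two are a pair of adjacent $Q$'s, say $Q_{j^\star}$ and $Q_{j^\star+1}$. This is the unique pair-with-arc realizing the conclusion: the index $i^\star$ is unique because $t=1$; the pair inside $I_{i^\star}$ is unique because $I_{i^\star}$ holds exactly two $Q$'s; and since the $I_i$ are pairwise disjoint, no adjacent $Q$-pair can lie inside two different arcs. I do not expect a deep obstacle here — the content is entirely in the two elementary spacing inequalities and the bookkeeping $s+t=N$, $s+2t=N+1$ — and the only point demanding care is the interplay of the half-open endpoints with possible coincidences $Q_m=P_i$, which is handled cleanly by observing at the outset that the arcs $[P_i,P_{i+1})$ genuinely partition $C$, so that the membership of each $Q_m$ is unambiguous.
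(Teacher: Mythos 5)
Your proof is correct and follows essentially the same route as the paper's: both partition $C$ into the $N$ half-open arcs $[P_i,\,P_{i+1})$, use a counting/pigeonhole argument, and rule out an empty arc via the key inequality $L/N > L/(N+1)$. Your explicit upper bound $n_i\le 2$ makes the bookkeeping slightly more airtight than the paper's uniqueness step (which assumes two arcs each contain a pair and derives an empty arc), but the underlying ideas are identical.
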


From \eqref{eqn_alt_pos_rep}, we know that for the proposed formation, the agent positions on the Lissajous curve partition the elliptical locus in parametrically equal parts in terms of parameter $\psi_c$. Thus from Lemma \ref{lem_ellipse_partitions}, at any point in time exactly two adjacent agent positions on the Lissajous curve for $N_d=N_c+1$  agents (shown as green spots in Fig.\ \ref{fig_reconfig_add}) must lie between two adjacent agent positions on the Lissajous curve for $N_c$ agents (shown as blue spots in Fig.\ \ref{fig_reconfig_add}) along the elliptical locus.
\begin{figure}[!h]
\centering
\includegraphics[width=0.8\linewidth]{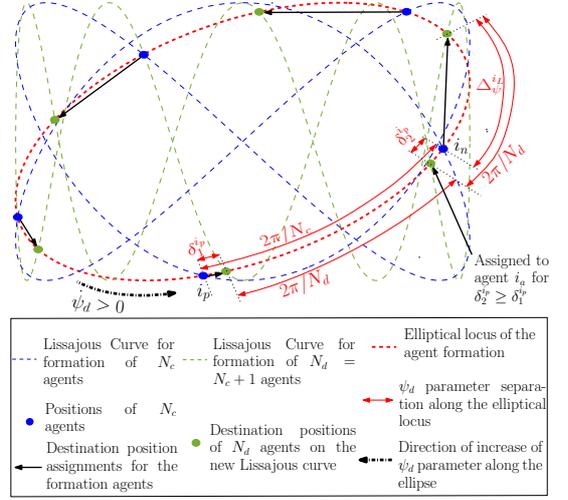}\\
\caption{Transition assignment example for the addition of an  agent from a $N_c=5$ agent formation for leader $i_L=i_n$.}
\label{fig_reconfig_add}
\end{figure}
Hence after the calculation of the stopping parameter value $s^i_f$ as discussed in Section \ref{sec_mono_sc},  each of the formation agents computes the transformed values $s^i_{d_f}$ and $\psi^i_d$ of the stopping parameter value $s^i_{f}=s^{i_c}_{f}$ and $\psi^i_c$ using \eqref{eqn_s_transform} and \eqref{eqn_psi_transform} respectively. Defining $\Psi_D$ as in \eqref{eqn_set_psi_D}, each formation agent then computes the following terms:
\begin{align}
\delta^i_1&=\left\lceil\left(\psi^{i}_d-\frac{o_d\pi}{2}\right)\frac{N_d}{2\pi}\right\rceil\frac{2\pi}{N_d}+\frac{o_d\pi}{2} -\psi^{i}_d,\label{eqn_delta_1}\\
\delta^i_2&=\frac{2\pi}{N_c}-\left(\delta^i_1+\frac{2\pi}{N_d} \right),\label{eqn_delta_2}
\end{align}
where $\delta^i_1$ is the  parameter separation between  $\psi^i_d$ and the closest destination position  on the Lissajous curve for $N_d$ agents (from the set $\Psi_D$) along the $\dot{\psi}_d>0$ direction.
As a consequence of Lemma \ref{lem_ellipse_partitions},
 the value of $\delta^i_2>0$  for exactly one of the formation agents (as shown in Fig.\ \ref{fig_reconfig_add}). We call this agent as agent $i_p$. Agent $i_p$ then selects the value of the destination parameter $\psi^{i_a}_D$ (from the set $\Psi_D$) for the entry position of the added agent $i_a$ as follows:
 \begin{align}
 \psi^{i_a}_D=\begin{cases} \psi^{i_p}_d+\delta^{i_p}_1 \mbox{, if } \delta^{i_p}_1>\delta^{i_p}_2\\
\psi^{i_p}_d+\frac{2\pi}{N_c}-\delta^{i_p}_2, \mbox{ if }\delta^{i_p}_1\leq\delta^{i_p}_2.
 \end{cases}
 \label{eqn_param_ia}
 \end{align}

It will be shown later in the transition assignment step that this choice ensures a shorter transition along the elliptical locus for the formation agents to reconfigure to the Lissajous curve for $N_d$ agents. Assuming that the reconfiguration is initiated at time $T_R$, the values of $N_c$,  $ \psi^{i_a}_D$, $s^{i_c}_c(T_R)$  and  $s^{i_a}_{cf}=s^{i_c}_{c_f}$  are communicated to agent $i_a$ by the formation agents. Agent $i_a$ transforms $s^{i_a}_{cf}$ to $s^{i_a}_{df}$ using \eqref{eqn_s_transform}, and also calculates coordinates of the entry point into the formation using \eqref{eqn_alt_pos_rep} as
\begin{align}
(x^{i_a}_E,y^{i_a}_E)=\left(A\cos( \psi^{i_a}_D- a_ds^{i_p}_{d_f}),\ B\sin( \psi^{i_a}_D+ b_ds^{i_p}_{d_f}) \right).
\label{eqn_entry_add}
\end{align}
It then computes the time period for monotone deceleration of $s^{i_c}_c$ for agent $i_c$ using \eqref{eqn_Tf_gtr} as $T^{i_c}_p=\frac{2(s^{i_c}_{cf}-s^{i_c}_c(T^i_R))}{\dot{s}_{nom}}$.
Agent $i_a$ then moves to  position given by \eqref{eqn_entry_add} using the symmetric transition for a way-point (discussed in Section \ref{sec_sym_wp}) with transition period $T_p=\max\left(T^{i_c}_p,\frac{15d_f}{8V_{max}}\right)$.
This ensures that the added agent $i_a$ does not reach its formation position before the formation agents decelerate to a halt with $\dot{s}_c=0$.

\subsubsection{  Leader selection and transition assignment} \label{sec_leader_add} After the parameter transformation step is completed for the formation agents at height $h_F$, and the agent $i_a$ has reached formation position given by \eqref{eqn_entry_add} at height $h_L<h_F$, the formation agents must be assigned a destination $\psi^j_D$ parameter values on the Lissajous curve for $N_d=N_c+1$ agents, from the set $\Psi_D$ given by \eqref{eqn_set_psi_D}, so that they can transition  along the formation ellipse t these locations for reconfiguration. This can be achieved by motion along the $\dot{\psi}_d>0 $ or the $\dot{\psi}_d<0 $ direction. 

The parameter value $\psi^i_d$ of the formation agent $i$ on  the Lissajous curve for $N_c$ agents ( corresponding to the blue spots in Fig.\ \ref{fig_reconfig_add}) and elements of set $\Psi_D$, corresponding to the location of the formation agents on the Lissajous curve for $N_d=N_c+1$ agents (green spots in Fig.\ \ref{fig_reconfig_add}),  are equi-spaced on the formation ellipse, with a parametric separation of $\frac{2\pi}{N_c}$ and $\frac{2\pi}{N_d}$ respectively. In Section \ref{sec_param_update_ia} from Lemma \ref{lem_ellipse_partitions}, agent $i_p$ was identified as the only formation agent  having exactly one pair of destination values from $\Psi_D$ in the interval $\left[\psi^{i_p}_d,\ \psi^{i_p}_d+\frac{2\pi}{N_c}\right)$ (i.e., with $\delta^{i_p}_2>0$ in \eqref{eqn_delta_2}). We call the  adjacent agent parametrically succeeding  $i_p$ as $i_n$, having parameter value $\psi^{i_n}_d=\psi^{i_p}_d+\frac{2\pi}{N_c} \bmod{2\pi}$. From \eqref{eqn_param_ia}, the formation position of agent $i_a$ lies between $i_p$ and $i_n$ on the formation ellipse. The leader agent to initialise the direction of transition assignment is chosen as
\begin{align}
i_L=\begin{cases} i_p, \mbox{ if }\delta^{i_p}_1>\delta^{i_p}_2\\
i_n, \mbox{ if }\delta^{i_p}_1\leq \delta^{i_p}_2
\end{cases}.
\label{eqn_leader_add}
\end{align} 
Thus  the destination values from $\Psi_D$ are assigned to the formation agents as the closest  value along $\dot{\psi}_d>0$ for $i_L=i_n$, and along $\dot{\psi}_d<0$ for $i_L=i_p$. This is mathematically written as
\begin{align}
\psi^i_D=\begin{cases}\left\lfloor\left(\psi^{i}_d-\frac{o_d\pi}{2}\right)\frac{N_d}{2\pi}\right\rfloor\frac{2\pi}{N_d}+\frac{o_d\pi}{2} \mbox{ if }i_L=i_p,\\
\left\lceil\left(\psi^{i}_d-\frac{o_d\pi}{2}\right)\frac{N_d}{2\pi}\right\rceil\frac{2\pi}{N_d}+\frac{o_d\pi}{2}\mbox{ if } i_L=i_n.
\end{cases}
\label{eqn_param_assign_add}
\end{align} 
As a consequence of Lemma \ref{lem_ellipse_partitions}, outside the segment $\left[\psi^{i_p}_d,\ \psi^{i_n}_d\right)$, the positions corresponding to values of $\psi^i_d$ and elements of $\Psi_D$ alternate along the elliptical locus as shown in Fig.\ \ref{fig_reconfig_add}. Thus this assignment yields parametrically non-overlapping transition intervals as shown in  Fig.\ \ref{fig_reconfig_add}. The destination parameter values from $\Psi_D$ are assigned according to \eqref{eqn_dest_non_leader} and \eqref{eqn_ni_rem} (as in  the removal case).

Also,  the magnitude of the parametric transition distance $\vert\Delta^i_\psi\vert$ (given by \eqref{eqn_delta_i_psi}) for agent $i$ can be expressed in terms of the parametric transition distance of the leader $\vert\Delta^{i_L}_\psi\vert$ using \eqref{eqn_del_ni_psi_rem}, where $n_i$ is calculated according to \eqref{eqn_ni_rem}.
%
If $i_L=i_n $ as shown in Fig.\ \ref{fig_reconfig_add}, then $\vert\Delta^{i_L}_\psi\vert=\vert\Delta^{i_n}_\psi\vert=\frac{2\pi}{N_d}-\delta^{i_p}_2$. Similarly if $i_L=i_p$, then $\vert\Delta^{i_L}_\psi\vert=\vert\Delta^{i_p}_\psi\vert=\frac{2\pi}{N_d}-\delta^{i_p}_1$. Thus the leader selection in \eqref{eqn_leader_add} ensures that the transition direction corresponding to the shorter parametric transition distance is selected. Since $N_d=N_c+1$ from \eqref{eqn_del_ni_psi_rem},  the longest transition interval for any formation agent for the agent addition case is given by
$$\Delta_{max}=\Delta^{i_L}_\psi.$$


This is followed by the symmetric transition in the $\psi_d$ parameter for the $N_c$ formation agents at formation height $h_F$ (as discussed in Section \ref{sec_sym_tr_psid}). 

\begin{claim} \label{clm_no_col_add}
The destination parameter assignment scheme given by \eqref{eqn_dest_non_leader} guarantees collision-free symmetric transition trajectories in parameter $\psi_d$ for the $N_c$ formation agents at formation height $h_F$.
\end{claim}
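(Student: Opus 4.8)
The plan is to mirror the argument used for the removal case in Claim~\ref{clm_no_col_rem}. Throughout this step only the $N_c$ formation agents are present at altitude $h_F$; the agent $i_a$ being added is still hovering at $h_L<h_F$, so it cannot collide with them during the symmetric $\psi_d$-transition. The strategy is: (i) determine the minimum over the whole manoeuvre of the $\psi_d$-separation $\vert\Delta^{ij}\psi(t)\vert$ of every \emph{adjacent} pair of formation agents, reducing this to the two endpoint separations by the monotonicity statement of Proposition~\ref{prop_sym_mono_sep}; and (ii) feed the resulting $\Delta^{ij}_{min}$ into Proposition~\ref{prop_ellipse_avoid} and the size bound \eqref{eqn_rdm_reconfig} to conclude that the elliptical locus on which the formation was parked in Section~\ref{sec_mono_sc} (chosen with $s_c\notin S_{avoid}$) admits a collision-free transition.

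First I would compute the endpoint separations. Immediately after the parameter transformation the $N_c$ agents are equi-spaced, so $\vert\Delta^{ij}\psi(T_0)\vert=\frac{2\pi}{N_c}$ for every adjacent pair. By Lemma~\ref{lem_ellipse_partitions}, the $N_c$ formation positions together with the single slot reserved for $i_a$ occupy the $N_d=N_c+1$ equi-spaced destination positions of $\Psi_D$, and $i_a$'s slot falls strictly between exactly one adjacent pair, namely $(i_p,i_n)$. Hence at $t=T_f$ a generic adjacent pair of formation agents has separation $\frac{2\pi}{N_d}$, while $(i_p,i_n)$ has separation $\frac{4\pi}{N_d}$. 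Since by Proposition~\ref{prop_sym_mono_sep} the function $\Delta^{ij}\psi(t)$ is monotone on $[T_0,T_f]$ and all transitions are in the same direction, $\vert\Delta^{ij}\psi(t)\vert$ stays between its two endpoint values: for a generic pair the gap shrinks, so the minimum is $\min\{\frac{2\pi}{N_c},\frac{2\pi}{N_d}\}=\frac{2\pi}{N_d}$; for $(i_p,i_n)$ the gap grows, since $\frac{4\pi}{N_c+1}>\frac{2\pi}{N_c}$ whenever $N_c>1$, so its minimum is $\frac{2\pi}{N_c}$. Therefore $\Delta^{ij}_{min}=\frac{2\pi}{N_d}$, the value already announced in Section~\ref{sec_sym_tr_psid}.

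Next I would apply Proposition~\ref{prop_ellipse_avoid} with $\left\vert\sin\!\left(\frac{\Delta^{ij}_{min}}{2}\right)\right\vert=\sin\!\left(\frac{\pi}{N_d}\right)$ and $r_{dm}$ from \eqref{eqn_rdm_reconfig}. Taking the index $j$ with $N_j=N_d$, this gives
\[ \delta_s=\frac{\pi}{2N_c}\,\frac{\sqrt{A^2+B^2}}{\sqrt{A^2a_j^2+B^2b_j^2}}\,\frac{\sin\left(\pi/N_{max}\right)}{\vert\sin\left(\pi/N_d\right)\vert}. \]
Because $N_d\le N_{max}$ the last ratio is at most $1$, and because $N_d=N_c+1\ge 3$ (using $N_c\ge N_{min}\ge 2$ from \eqref{eqn_Nmin}) the co-prime pair $(a_j,b_j)$ cannot be $(1,1)$, so $A^2a_j^2+B^2b_j^2>A^2+B^2$ and the middle ratio is strictly less than $1$; hence $\delta_s<\frac{\pi}{2N_c}$. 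By Remark~\ref{rem_feasible_ellipse} this means $S_{avoid}\ne[0,2\pi)$, and since the formation was deliberately brought to rest on a feasible ellipse ($s_c\notin S_{avoid}$) in Section~\ref{sec_mono_sc}, Proposition~\ref{prop_ellipse_avoid} guarantees that the parametrically non-overlapping transition intervals produced by \eqref{eqn_dest_non_leader}--\eqref{eqn_ni_rem} never bring adjacent agents closer than the required clearance; the symmetric $\psi_d$-transition of the $N_c$ agents is therefore collision-free.

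The step I expect to need the most care is the exceptional pair $(i_p,i_n)$: unlike the removal case the generic gaps here \emph{shrink} towards $\frac{2\pi}{N_d}$, so the binding configuration is the terminal one, but the single pair straddling $i_a$'s slot behaves oppositely and must be checked separately, which is precisely where $N_c>1$ enters. A secondary subtlety is making the bound $\delta_s<\frac{\pi}{2N_c}$ \emph{strict}, which relies on $N_d\ge 3$ excluding the pair $(1,1)$, and noting that agents that are not adjacent on the formation ellipse are automatically kept apart by the non-overlapping transition intervals, so only adjacent pairs need be examined.
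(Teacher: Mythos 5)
Your proof is correct and follows essentially the same route as the paper's: endpoint separations combined with the monotonicity of Proposition~\ref{prop_sym_mono_sep} give $\Delta^{ij}_{min}=\frac{2\pi}{N_d}$, which is then fed into Proposition~\ref{prop_ellipse_avoid} together with the size bound \eqref{eqn_rdm_reconfig} to obtain $\delta_s<\frac{\pi}{2N_c}$ and conclude via the feasible stopping ellipse of Section~\ref{sec_mono_sc}. Your explicit treatment of the exceptional pair $(i_p,i_n)$ straddling $i_a$'s slot, and of why the bound on $\delta_s$ is strict, only spells out details the paper leaves implicit.
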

\begin{proof}
We discuss collision-free transition between  $i_a$ (waiting at formation height $h_L$) and leader $i_L$ separately in the subsection \ref{sec_ia_entry}. The remaining $N_c$ agents are at formation altitude $h_F$ and equi-parametrically spaced along the elliptical locus prior to any transitions in  parameter $\psi_d$ at time $t=T_0$. We assume agent $j$ succeeds agent $i$ along the selected direction of transition, i.e., $i,j$ satisfy $n_j=n_i+1$ in \eqref{eqn_ni_rem}. Thus the  initial value of  $\Delta^{ij}\psi$ (given in \eqref{eqn_delta_ij_psi}) for adjacent formation agents $i$ and $j$ prior to reconfiguration is
\begin{align}
 \Delta^{ij}\psi(T_0)=\begin{cases}\frac{2\pi}{N_c} \mbox{ if }i_L= i_n,\\
 \frac{-2\pi}{N_c} \mbox{ if }i_L= i_p .
  \end{cases}
  \label{eqn_deltaij_psi_T0_add}
 \end{align}
 After the symmetric transition is over, $i_a$ is at its formation position at height $h_F$ and we have $N_d$ agents at their assigned parameter values given by \eqref{eqn_param_assign_add}. As a consequence of Lemma \ref{lem_ellipse_partitions}, outside the segment $\left[\psi^{i_p}_d,\ \psi^{i_n}_d\right)$, the positions corresponding to values of $\psi^i_d$ and elements of $\Psi_D$ assigned by \eqref{eqn_param_assign_add} alternate along the elliptical locus as shown in Fig.\ \ref{fig_reconfig_add}. Thus after the completion of the symmetric transition, 
\begin{align}
\Delta^{ij}\psi(T_f)=\psi^j_D-\psi^i_D=\begin{cases}\frac{2\pi}{N_d} \mbox{ if }i_L= i_n,\\
 \frac{-2\pi}{N_d} \mbox{ if }i_L= i_p. 
  \end{cases}
  \label{eqn_deltaij_psi_Tf_add}
\end{align}
 As a consequence of Proposition \ref{prop_sym_mono_sep}, from \eqref{eqn_deltaij_psi_T0_add} and \eqref{eqn_deltaij_psi_Tf_add}, the  minimum value of $\vert\Delta^{ij}\psi(t)\vert=\Delta^{ij}_{min}=\frac{2\pi}{N_d}$ for the agent addition operation.
From Proposition \ref{prop_ellipse_avoid} and   \eqref{eqn_rdm_reconfig}, 
$\delta_s=\frac{\pi}{2N_c}\frac{\sqrt{A^2+B^2}}{\sqrt{A^2 a_j^2+B^2 b_j^2}}\frac{\sin\left(\frac{\pi}{N_{max}}\right)}{\left\vert\sin\left(\frac{\pi }{N_d}\right)\right\vert}$.

Since $N_d \leq N_{max}$ and $(a_j,b_j)$ are co-prime positive integers, $\delta_s<\frac{\pi}{2N_c}$. Moreover, as  the formation agents stop on a feasible ellipse with $s_c\neq S_{avoid}$ (which is ensured as discussed in  Section \ref{sec_mono_sc}),  the symmetric transition trajectories in the parameter $\psi_d$ for the proposed transition assignment scheme are collision-free.
\end{proof}
\subsubsection{ Entry of agent $i_a$ in the formation}\label{sec_ia_entry}
Recall that agent $i_a $ is  still waiting at its formation position at $h_L<h_F$. 
As a consequence of leader selection in \eqref{eqn_leader_add} and parameter assignment in  \eqref{eqn_param_ia}, agent $i_L$ is the nearest formation agent to agent $i_a$ in terms of the parametric separation in  $\psi_d$ before the commencement of the transition along the formation ellipse.
During the symmetric transition in parameter $\psi_d$, agent $i_a$ rises to formation altitude $h_F$ when its Euclidean distance from agent $i_L$ is greater  than $2r_{dm}$. This guarantees the collision-free rise of agent $i_a$ to result in a formation of $N_d=N_c+1$ agents at height $h_F$ at the end of the symmetric transition trajectory discussed in Section \ref{sec_sym_tr_psid}.

\subsection{ Agent replacement }\label{sec_agent_replacement}
The agent replacement operation replaces formation agent $i_r$ with a new agent $i_R$. The algorithmic  sketch of the steps for replacing an agent in the formation of $N_c$ agents  are as follows: 
 
\begin{algorithmic}[1]
\Statex {\bf Initial condition: } Proposed formations of $N_c$ agents  moving on the Lissajous curve  for surveillance.
\State {\bf Replacement Initialisation: }Agent $i_R$ initialised with id of agent $i_r$, and waiting at height $h_L<h_F$, alerts the closest formation agent $i_c$ to initiate the agent replacement operation. 
\State  {\bf Monotone deceleration of parameter $s_c$:} initiated by agent $i_c$  for all formation agents. 

\State {\bf Parameter update for agent $i_R$:}  Formation agent $i_r$ (at height $h_F$) communicates  the $s^{i_r}_c$, $\psi^{i_r}_c$ parameters to agent $i_R$. $i_R$ moves to the position of $i_r$ at height $h_L<h_F$. 

\State{\bf Position exchange of $i_r$ and $i_R$: } This is done after the previous two steps are complete. Agent $i_r$ returns to base and lands.
\State{\bf Monotone acceleration of parameter $s_d=s_c$:} initiated by agent $i_R$ after  the previous step, $N_c$ formation agents accelerate  along the Lissajous curve for $N_c$ agents to resume area surveillance. 

\end{algorithmic}
 We discuss the reconfiguration steps  unique to the replacement of the formation agent $i_r$ by agent $i_R$ below.

\subsubsection{ Replacement initialisation}\label{sec_rep_init} The agent $i_R$ is initialised with the id $i_r$ of the formation agent that it is meant to replace. Agent $i_R$ takes off from the base location and waits at height $h_L<h_F$ for the formation agents (at height $h_F$) to approach. When the closest formation agent $i_c$ is within communication range, agent $i_R$ alerts agent $i_c$ to initiate the agent replacement operation. Also,  agent $i_R$ communicates the id  of the formation agent $i_r$ to  the formation agents via agent $i_c$.
Agent $i_c$ then initiates the monotone deceleration of the parameter $s_c$ (discussed in Section \ref{sec_mono_sc} ) for the $N_c$ formation agents.


\subsubsection{ Parameter update of agent $i_R$}\label{sec_iR_update} With the initiation of the monotone deceleration trajectory for $s_c$, 
the agent $i_r$ sends its parameter  $\psi^{i_r}_c$, $s^{i_r}_c(T_R)$ and   parameter $s^{i_r}_{cf}$  corresponding to the stopping formation ellipse, to  agent $i_R$ using the communication links via agent $i_c$.
Using \eqref{eqn_alt_pos_rep}, the agent $i_R$ at height $h_L$ calculates coordinates directly below agent $i_r$ at height $h_F$ as
\begin{align}
(x^{i_R}_E,y^{i_R}_E)=\left(A\cos( \psi^{i_r}_c- a_cs^{i_r}_f),\ B\sin( \psi^{i_r}_c+ b_cs^{i_r}_f) \right).
\label{eqn_entry_rep}
\end{align}

Similar to the addition case (in Section \ref{sec_param_update_ia}) the agent $i_R$ computes time period for monotone deceleration of $s^{i_r}_c$  using \eqref{eqn_Tf_gtr} as $T^{i_r}_p=\frac{2(s^{i_r}_{cf}-s^{i_r}_c(T^i_R))}{\dot{s}_{nom}}$, and moves to the 
  position given by \eqref{eqn_entry_rep} using the symmetric transition for a way-point (discussed in Section \ref{sec_sym_wp}) with transition period $T_p=\max\left(T^{i_r}_p,\frac{15d_f}{8V_{max}}\right)$.
%
%
%

\subsubsection{ Position exchange of $i_r$ and $i_R$ } \label{sec_ir_iR_swap}After the formation agents decelerate to rest at height $h_F$, and agent $i_R$ reaches  the  position coordinates \eqref{eqn_entry_rep} at height $h_L<h_F$ (say at time $t=T^s_0$), then the agents $i_R$ and $i_r$ are both at rest at the same position coordinates,  and are separated in altitude by distance $h_F-h_L$. In Section \ref{sec_ellipse_formation}, we have seen that for a fixed value of $s=s_o$, $\mathcal{E}(\psi)=[A\cos(\psi-as_o)\ B\sin(\psi+bs_o)]^T$ is the parametric equation of an ellipse  with parameter $\psi$. Thus the tangent vector is given by $\mathcal{T}(\psi)= \left[-A\sin(\psi-a_cs_o)\  B\cos(\psi+b_cs_o))\right]^T.$ Consider the vector $\mathcal{N}(\psi)=\pm[B\cos(\psi+b_cs_o)\ A\sin(\psi-a_cs_o)]^T$. Then the inner product $\langle \mathcal{N}(\psi),\mathcal{T}(\psi)\rangle=0$ for all $\psi$, which implies that it gives the direction of the local normal to the ellipse. If $\mathcal{N}(\psi)$ is  an outward normal to the elliptical locus at the point $\mathcal{E}(\psi)$, then $\langle \mathcal{N}(\psi), \mathcal{E}(\psi)\rangle=AB\cos(N_c s_o) \geq 0$, because  the elliptical locus is always centered at the origin, and is a convex curve. Thus to ensure the selection of the outward normal, $\mathcal{N}(\psi)$ is chosen as
\begin{align}
\mathcal{N}(\psi)= \sign \left(\cos(N_c s_o)\right)\left[\begin{matrix}B\cos(\psi+bs_o)\\ A\sin(\psi-as_o)\end{matrix}\right]
\end{align}
where $\sign(x)=\begin{cases} 1, \mbox{ if }x\geq 0\\  -1, \mbox{ if }x< 0\end{cases} $.

The agent $i_r$ computes a way-point on unit outward normal direction $\hat{N}(\psi)=\frac{N(\psi)}{\parallel N(\psi)\parallel}$  at distance $d^{i_r}_f=3r_{dm}$, and moves to this point using the symmetric trajectory to a waypoint (discussed in Section \ref{sec_sym_wp})
with  $T_p=\frac{15d_f}{4V_{max}}$.
Upon completion of this motion, the agent $i_r$ alerts agent $i_R$, which rises to the formation height $h_F$ and agent $i_r$ simultaneously reduces its altitude to $h_L$. The formation containing agent $i_R$, then performs monotone acceleration of parameter $s_d=s_c$ along the Lissajous curve for $N_c$ agents (discussed in Section \ref{sec_mono_sd}) and agent $i_r$ returns to base to land.

%
\begin{remark}
In both reconfiguration operations of removal and replacement, the removed or replaced agent $i_r$ is at a height $h_L<h_F$ at the end of the reconfiguration and is  made to return to the base and land, using the  symmetric trajectory to a way-point (discussed Section \ref{sec_sym_wp}) with $T_p=\frac{15d_f}{8V_{max}}$.
\end{remark}


\section{Simulation and experimental validation \vspace{-0.2cm}}\label{sec_sim_expt}
The proposed surveillance strategy of using an elliptical formation of multiple agents on a Lissajous curve discussed in Section \ref{sec_surveillance} was validated through simulation and experiments with differential drive robots in  prior work in \cite{lisiros}. The video of these experiments and simulation can be found at the web-link:\\ \small{https://youtu.be/rhygE32UDO8} 

The reconfiguration scheme for the formation discussed in Section \ref{sec_recon} is  validated here by simulation in MATLAB\reg for parametric agents (agents whose positions are defined by \eqref{eqn_alt_pos_rep}) having finite  non-zero sizes. In order to achieve a decentralized   implementation of the reconfiguration strategy on actual quadrotors, we first develop and test the on-board software for path planning and inter-agent communication for the quadrotors as agents, using a Software-In-The-Loop (SITL) simulator in a ROS-Gazebo environment. The same software is then used on the actual quadrotors (developed in-house) to experimentally validate the reconfiguration strategy   in a motion capture environment.
\begin{table}[h!]
\centering
\caption{Inputs to Algorithm \ref{algo_initializer}. (Distances in meters)}
\label{tab_algo_ip}
\resizebox{7.5cm}{!}{\tiny
\begin{tabular}{|l|l|l|l|l|l|l|l|}
\hline
                                                              & $L$ & $H$ & $r_s$ & $r_{com}$ & $V_{max}$ & $N_{extra}$ & $\eta$ \\ \hline
\begin{tabular}[c]{@{}l@{}}MATLAB\\ Simulation 1\end{tabular} & 10      & 7       & 4.7         & 9.5             & 0.5                  & 2           & 1.05   \\ \hline
\begin{tabular}[c]{@{}l@{}}MATLAB\\ Simulation 2\end{tabular} & 10     & 7       &     1.5        &        3.2         &                    1 &     2        &  1.05      \\ \hline
\begin{tabular}[c]{@{}l@{}}SITL \\ Simulation\end{tabular}    &     25      &     16      &       7      &     11     &       0.3              &        1     &       1.05 \\ \hline
Experiment                                                    &       5    &  5         &   2.7          &      5.5           &    0.2  &  1             &    1.05    \\ \hline
\end{tabular}
}
\end{table}

\begin{table}[!h]
\centering
\caption{Output of Algorithm \ref{algo_initializer}. (Distances in meters)}
\label{tab_algo_op}
\resizebox{7.5cm}{!}{
\begin{tabular}{|l|l|l|l|l|l|l|l|l|l|l|}
\hline
                                                              & $A$ & $B$ & $a$ & $b$ & $o$             & $N$ & $N_{min}$ & $N_{max}$ & $\dot{s}_{nom}$ & $r_{dm}$ \\ \hline
\begin{tabular}[c]{@{}l@{}}MATLAB\\ Simulation 1\end{tabular} & 5   & 3.5 & 2   & 3   & $\frac{\pi}{2}$ & 5   & 4         & 6         & 0.0345          & 0.481    \\ \hline
\begin{tabular}[c]{@{}l@{}}MATLAB\\ Simulation 2\end{tabular} & 5   & 3.5 & 4   & 11  & $\frac{\pi}{2}$ & 15  & 14        & 16        & 0.023           & 0.074    \\ \hline
\begin{tabular}[c]{@{}l@{}}SITL\\ Simulation\end{tabular}     & 12.5  & 8   & 3   & 7   & 0               & 10  & 9         & 10        & 0.0045          & 0.459    \\ \hline
Experiment                                                    & 2.5 & 2.5 & 3   & 2   & 0               & 5   & 4         & 5         & 0.0222          & 0.407    \\ \hline
\end{tabular}
}
\end{table}
\subsection{MATLAB\reg simulations}

To validate the theory  of the proposed surveillance (discussed in Section \ref{sec_surveillance}) and reconfiguration strategy (discussed in Section \ref{sec_recon}), we present two MATLAB\reg simulations: The first to illustrate the nature of the agent trajectories and the second to demonstrate scalability. The inputs to  Algorithm \ref{algo_initializer} for both simulations are given in Table \ref{tab_algo_ip} and the corresponding outputs are given in Table \ref{tab_algo_op}. The addition of agent 6 to a 5 agent formation at time $t=12\ sec$ is  considered for discussion here, and the simulated trajectories shown in the Fig.\ \ref{fig_sim_traj} include the speeds of the agents, the parameter rates $\dot{\psi},\ \dot{s}$ and parameter values $\psi,\ s$ of all agents. As illustrated by the Fig.\ \ref{fig_sim_traj}, the monotone parametric trajectory \eqref{eqn_g_tr} smoothly accelerates and decelerates the agents along the Lissajous curve and the symmetric trajectory \eqref{eqn_g_s_tr} smoothly transitions the agents from one Lissajous curve to the other. Furthermore, as discussed in the theory, the speeds of the agents are always maintained below $V_{max}=50\ cm/s $ in the X-Y plane. (For altitude changes we use step change commands when X-Y plane velocities are zero). The parameter trajectories are smooth except for the step just before the $\Delta T_3$ interval. This  jump in the parameter value corresponds to the parameter transformation step where the agent parameters $\psi$ and $s$ for the current Lissajous curve are expressed in terms of the destination Lissajous curve $(\psi_d,s_d)$ while conserving position coordinates (as discussed in Section \ref{sec_param_tx}). The  $s$ parameter trajectories of agents 1-5 overlap as they all lie on the same formation ellipse, both before and after the reconfiguration. The $s$ parameter value of the agent 6 is initialised on the destination Lissajous curve as it reaches its assigned formation position directly after the $\Delta T_1$ transition. The parameter values $\psi$ prior to the addition at $t=12\ sec$ and after reconfiguration $t>60\ sec$ are equispaced, indicating equi-parametric formation along the elliptical locus on both the Lissajous curves, before and after reconfiguration.
\begin{figure}[h!]
\centering
\includegraphics[width=0.9\linewidth]{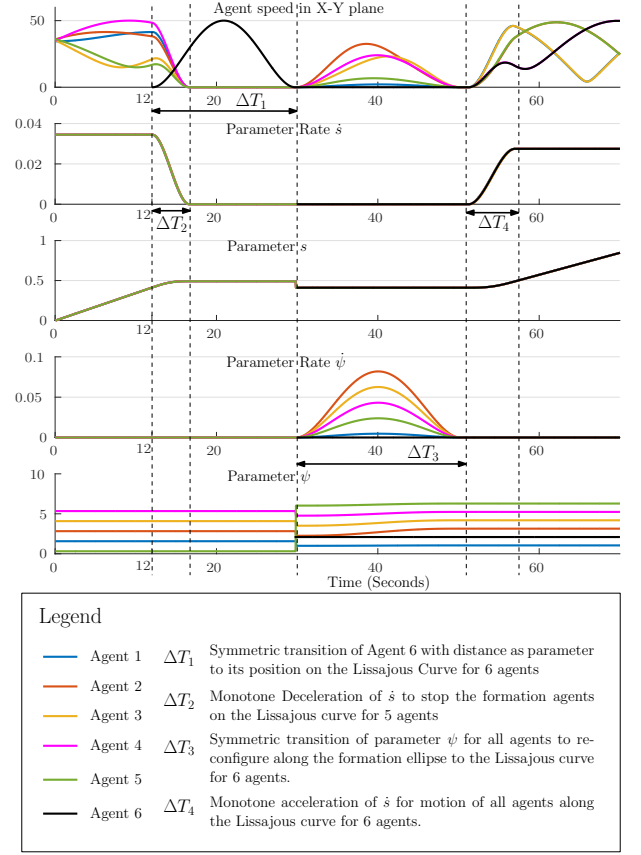}\\
\caption{Trajectories for agent addition case for MATLAB simulation 1}
\label{fig_sim_traj}
\end{figure}

In order to demonstrate the scalability for the second simulation, we assume a smaller sensor footprint radius and communication range  (refer to Table \ref{tab_algo_ip}). Thus the number of agents required for this case is larger. For both simulations, the circular hull radius of the agents is selected as the sufficient bound $r_{dm}$ given by Algorithm \ref{algo_initializer}. From the simulations, we observe that the motion of the agents both during surveillance along the Lissajous curve and during reconfiguration are collision-free, thus validating the strategies for surveillance and reconfiguration designed in Sections \ref{sec_surveillance} and \ref{sec_recon}.

\begin{remark} The videos of both the MATLAB\reg simulations for all three reconfiguration operations can be found at the web-link:\\ {\small {https://youtu.be/HEg5XfbBusY}}
\end{remark}


\subsection{Software-In-The-Loop simulation}\label{sec_sitl}
Since the proposed surveillance and reconfiguration strategies are developed for aerial agents such as quadrotors or helicopters, our final objective is to implement it  using programmable autonomous quadrotors.

The quadrotors built for the experiment use the Pixhawk v1\footnote{ \url{https://docs.px4.io/en/flight_controller/pixhawk.html}} flight controller running the PX4 flight stack\footnote{ \url{https://dev.px4.io/en/}} for stabilisation of the drone. Reference commands (such as commanded position) can be sent to the Pixhawk using the MAVLink communication packets\footnote{ \url{https://mavlink.io/en/}} on a serial channel.

 To simulate this setup, we use the Software-In-The-Loop (SITL) simulator\footnote{ \url{https://dev.px4.io/en/simulation/gazebo.html}} for the PX4 flight stack that uses the Robot Operating System (ROS) (Version: Kinetic Kame\footnote{ \url{http://wiki.ros.org/kinetic}})  along with the physics simulator Gazebo\footnote{ \url{http://gazebosim.org/}} (Version 8). This simulation is done on a computer system equipped with a NVIDIA Geforce GTX 1060 graphics card and running the Ubuntu 16.04 Xenial LTS operating system. For each quadrotor, an instance of the PX4 flight stack is  simulated, and the proposed strategy for surveillance (Sectiopn \ref{sec_surveillance}) and reconfiguration (Section \ref{sec_recon}) is implemented as a C++ script, which is written adhering to the node-topic structure of ROS (called {\it UAV\_i\_ctrl} node for agent $i$). The {\it MAVROS}\footnote{ \url{http://wiki.ros.org/mavros}} package is used to translate between the ROS interface and the MAVLink packets, which are sent via a UDP port to the PX4 flight stack simulation (SITL component) of the corresponding quadrotor. In this manner the SITL simulation environment allows simulation of the physics for multiple quadrotors along with a simulated instance of the PX4 flight stack for each agent. The SITL component  acquires the quadrotor  states as feedback and sends actuator command values to the Gazebo simulator as shown in the Fig.\ \ref{fig_sitl_blk}.

Each quadrotor receives communication data from all other quadrotors in the simulation. This is done for ease of implementing the multi-agent network in ROS, and the limited communication range $r_{com}$ is simulated in the navigation code for each agent by ignoring received data from agents outside a sphere of radius $r_{com}$ centered around the agent. We use the {\it joy}\footnote{ \url{http://wiki.ros.org/joy}} package to issue basic commands to the multi-agent formation such as  take-off, mission start,  land, and  reconfiguration commands, namely: \\ 1) removal command with agent ID\\ 2) replacement command with agent ID\\ 3) addition command to initiate formation reconfiguration.

\begin{figure}[!h]
\centering
\includegraphics[width=0.65\linewidth]{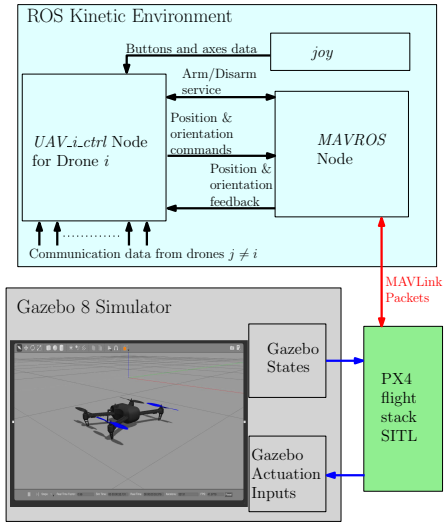}\\
\caption{SITL simulation framework for each 3DR Iris quadrotor in ROS-Gazebo environment, using the PX4 flight stack simulated in Loop}
\label{fig_sitl_blk}
\end{figure}

The inputs and outputs of  the Algorithm \ref{algo_initializer} for the SITL simulation are given in Tables \ref{tab_algo_ip} and \ref{tab_algo_op} respectively. From Table \ref{tab_algo_op}, we see that the upper bound on circular hull radius of the agents is $r_{dm}=0.459\ m$. For the SITL simulator, we have selected the 3DR Iris quadrotor\footnote{ \url{http://www.arducopter.co.uk/iris-quadcopter-uav.html}} model  (shown in Fig.\ \ref{fig_sitl_blk}) from the available models. This quadrotor has a motor to motor length of $55\ cm$ and a propeller diameter of $10\ inch$ (or $25.4\ cm$). Thus it has a circular hull of radius $r_d=0.402\ m<r_{dm}$ (calculated as  $\frac{55+24.5}{2}\ cm$). Gazebo simulates the complete physics of all the 3DR Iris quadrotors, along with  an instance of the simulated PX4 flight stack for each quadrotor, which tracks the reference position commands issued  using its internal PID control loops.

\begin{remark}\label{rem_sitl} The video of the SITL multi-quadrotor simulation can be found at the web-link: \\{\small  {https://youtu.be/XKXlvEDB-Qo}}\\ The top-left window is a video recording of the Gazebo SITL simulation, and the top-right window shows the video of the recorded simulation coordinates of the quadrotors,  plotted in MATLAB\reg.
\end{remark}

\subsection{Implementation}
\begin{figure}[h]
\centering
\includegraphics[width=0.9\linewidth]{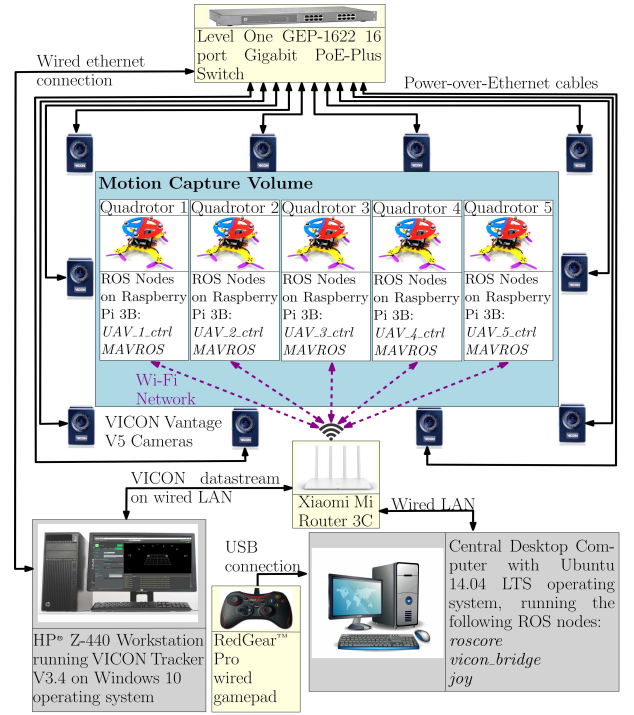}\\
\caption{Block diagram of the experimental setup}
\label{fig_multi_drone_expt_blk}
\end{figure}

\begin{figure}[h]
\centering
\includegraphics[width=0.8\linewidth]{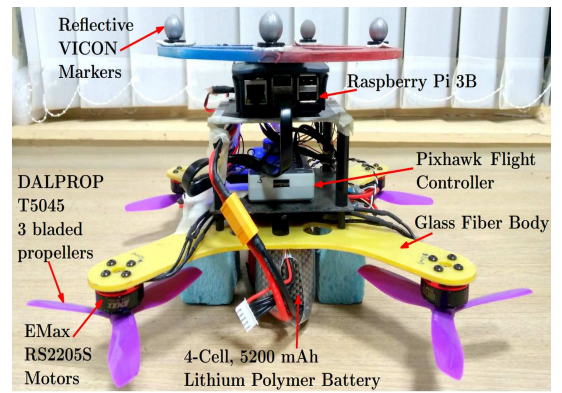}\\
\caption{Quadrotor used for experiment}
\label{dronepic}
\end{figure}

The experiment discussed here has been performed in an indoor laboratory space using a motion capture system for localisation. This system comprises of   ten Vantage V5\footnote{  \url{https://www.vicon.com/products/camerasystems/vantage}} cameras as shown in Fig.\ \ref{fig_multi_drone_expt_blk}. The  quadrotors used for the experiment were made in-house (shown in Fig.\ \ref{dronepic}). The design specifications of  the quadrotors are given in  Table \ref{tab_drone}. The sensor calibration and  inner PID control-loops tuning for these quadrotors for the PX4 flight stack running on the Pixhawk v1 flight controller was done using the QGroundControl\footnote{\url{http://qgroundcontrol.com/}} software.

\begin{table}[h]
\centering
\caption{Quadrotor Specifications}
\label{tab_drone}
\resizebox{7.0cm}{!}{\tiny
\begin{tabular}{|l|l|l|}
\hline
Structure                    & \multicolumn{2}{l|}{\begin{tabular}[c]{@{}l@{}}X-type frame $20\ cm \times 20\ cm$ \\ 3 mm thick glass fiber sheet\end{tabular}}                          \\ \hline
Motors                       & \multicolumn{2}{l|}{EMax RS2205S 2300kV}                                                                                                                  \\ \hline
Propellers                   & \multicolumn{2}{l|}{\begin{tabular}[c]{@{}l@{}}DALPROP T5045 3 bladed \\ 5 inch diameter, 4.5 inch pitch\end{tabular}}                                     \\ \hline
Speed Controllers & \multicolumn{2}{l|}{\begin{tabular}[c]{@{}l@{}}BLHeli\_S DShot600 \\ 30A Cicada ESC\end{tabular}}                                                         \\ \hline
Flight Controller            & \multicolumn{2}{l|}{\begin{tabular}[c]{@{}l@{}}Pixhawk v1 with Firmware: \\ px4fmu-v2\_lpe v1.6.0\\  release candidate 4\end{tabular}}                   \\ \hline
Companion Computer           & \multicolumn{2}{l|}{\begin{tabular}[c]{@{}l@{}}Raspberry Pi 3B with Ubuntu\\ Mate  16.04 Operating System and \\ ROS Kinetic Kame installed\end{tabular}} \\ \hline
Battery                      & \multicolumn{2}{l|}{\begin{tabular}[c]{@{}l@{}}4-Cell(4S), 25C, 5200mAh \\ Lithium-Polymer with \\ 16.8V peak voltage\end{tabular}}                       \\ \hline
Total Weight                 & \multicolumn{2}{l|}{1.18 kg}                                                                                                                              \\ \hline
Endurance                    & \multicolumn{2}{l|}{12-14 minutes}                                                                                                                        \\ \hline
\end{tabular}
}
\end{table}

The inputs and outputs of  the Algorithm \ref{algo_initializer} for the experiment are given in Tables \ref{tab_algo_ip} and \ref{tab_algo_op} respectively. From Table \ref{tab_algo_op} we see that the upper bound on circular hull radius of the agents is $r_{dm}=0.407\ m$. The quadrotor we have constructed has a motor to motor length of $28.3\ cm$ and a propeller diameter of $5\ inch$ (or $12.7\ cm$). Thus it has a circular hull of radius $r_d=0.205\ m<r_{dm}$ (calculated as  $\frac{28.3+12.7}{2}\ cm$). The same C++ script {\it UAV\_i\_ctrl} developed for the SITL simulation in Section \ref{sec_sitl} is used for implementing the proposed surveillance and reconfiguration strategy with the quadrotors shown in Fig.\ \ref{dronepic}. Each quadrotor has a Raspberry Pi 3B\footnote{\url{https://www.raspberrypi.org/products/raspberry-pi-3-model-b/}} companion computer onboard which runs the {\it UAV\_i\_ctrl} and {\it MAVROS} nodes for the corresponding quadrotor. As a result, the proposed strategy is implemented in a decentralized manner. A block diagram of the experimental setup is shown in Fig.\ \ref{fig_multi_drone_expt_blk}. The quadrotors are fitted with reflective  markers  for operation in the VICON motion capture space. The VICON cameras detect these markers and the HP workstation processes the camera data using the VICON Tracker V3.4 software\footnote{\url{https://docs.vicon.com/display/Tracker34/Vicon+Tracker+User+Guide}} and broadcasts its data stream on a Local Area Network.  The {\it roscore} node\footnote{\url {http://wiki.ros.org/roscore}} which is the master node for handling the complete ROS network  runs on a central computer  running ROS Indigo Igloo\footnote{ \url{http://wiki.ros.org/indigo}} on the Ubuntu 14.04 LTS. This computer processes the VICON data stream and converts it to a ROS compatible format using the {\it vicon\_bridge} node\footnote{ \url{http://wiki.ros.org/vicon_bridge}} in ROS Indigo. It also runs the {\it joy} node to read joystick commands for initiating take off, land, agent removal with ID, agent replacement with ID and agent addition. The {\it roscore} node running on the central computer interacts with the Raspberry Pi's via a Wi-Fi network setup using a wireless router. For this we use the concept of a  multicomputer ROS network\footnote{ \url{http://wiki.ros.org/ROS/Tutorials/MultipleMachines}}. Thus each Raspberry Pi receives localisation as well as inter-agent communication data via Wi-Fi within this multi-computer ROS network, and in turn commands the Pixhawk flight controller via a local instance of the {\it MAVROS} Node, using a USB wired link. The Raspberry Pi issues $(X,Y,Z)$ position commands corresponding to the $s^i,\psi^i$ parameters of agent $i$, and the flight altitude ($h_F=1.5\ m$ or $h_L=0.5\ m$). It also issues a constant heading command of $0\ rad$. The Pixhawk v1 flight controller tracks these commands using the PID loops in the PX4 firmware. These loops in turn use the on-board sensor data and localisation data available from the motion capture  system as feedback.

\begin{remark}\label{rem_expt} The video of the  multi-quadrotor experiment discussed above can be found at the web-link:\\ {\small {https://youtu.be/DUNR0-T9zTA}}\\ The top-left window is a video recording of the quadrotors in flight, and the top-right window shows the video of recorded position coordinates of the quadrotors as captured by the motion capture system (orthographic top view), plotted in MATLAB\reg. The bottom left video shows the motion capture markers on the quadrotors  seen by the VICON cameras on the VICON Tracker V3.4 software.
\end{remark}

From the videos in Remark \ref{rem_sitl} and Remark \ref{rem_expt}, we see that for the  SITL simulation and the experiment in a VICON environment, all three reconfiguration operations are performed with smooth collision-free trajectories of the quadrotors in the same simulation/flight, hence validating our proposed multi-agent surveillance and formation reconfiguration strategy. The  step commands are only used for initialisation of the quadrotor  positions in the formation, and for changing altitude from $h_F$ to $h_L$ and vice-versa.

\section{Conclusion }
\label{sec_conc}\vspace{-0.2cm}
We have proposed in \cite{lisiros}, a multi-agent formation on  Lissajous curves which performs collision-free surveillance of a rectangular area. We have  proposed here a reconfiguration strategy whereby a quadrotor can be added, removed or replaced using a decentralized cooperating scheme. 

We have validated our results through MATLAB\reg simulations for agents having a non-zero size satisfying a theoretically derived size bound. To demonstrate the practical applicability of the proposed surveillance and reconfiguration strategies, we have also presented simulations, for quadrotors in a ROS-Gazebo based Software-In-The-Loop simulator and have implemented the same with a team of five quadrotors in a motion capture environment. This work has potential applications in security, asset protection, agricultural monitoring, distributed sensing, etc.

\section*{Acknowledgements \vspace{-0.2cm}}
We  thank  Vraj Parikh and  Shoeb Ahmed Adeel for their help with the design and construction of the quadrotors.  We also thank Gaurav Gardi for assistance with setting up the SITL simulator, and Mr.\ Kumar Khot for help with logistics and setting up of the VICON motion capture system. The quadrotors were constructed and flight tested in the Miniature Aerial Vehicle (MAV) Laboratory at the Department of Aerospace Engineering, IIT Bombay, and the experiments were conducted in the Autonomous Robots and Multi-agent Systems (ARMS) Laboratory at the Interdisciplinary Progamme  in Systems and Control Engineering, IIT Bombay.

\section*{Appendix}

\noindent{\bf Proof of  Lemma \ref{clm_calculus_variations}:}
Let $V(\dddot{g}(t),t)=\frac{{\dddot{g}}^2(t)}{2}$, then cost $J=\int\limits_{0}^{T_f} V(\dddot{g}(t),t) dt $. For convenience of notation denoting $k^{th}$ detivative of $g(t)$ as $g^{(k)}(t)$,  the first variation $\partial J$ is computed as:
\begin{align}
\partial J= \int\limits_{0}^{T_f}\left( V\left(\dddot{g}(t)+\delta_{g^{(3)}}(t),t\right) -   V\left(\dddot{g}(t),t\right)\right) \ dt
\label{eqn_f_variations}
\end{align}
and by the necessary condition for optimality, at the optimal trajectory $g^*(t)$ (denoted as $g^*_t$ for short),   $\partial J=0$. Thus using Taylor series expansion in \eqref{eqn_f_variations} about $g(t)=g^*_t$ yields
$
\partial J= \int\limits_{0}^{T_f} \left.  \frac{\partial V(g^{(3)}(t),t)}{\partial g^{(3)}(t)}\right\vert_{g^*_t} \delta_{g^{(3)}}(t) \ dt.
$
By repeated application of integration by parts,
{\small
\begin{align*}
\partial J&= \left.\frac{\partial V(g^{(3)}(t),t)}{\partial g^{(3)}(t)}\right\vert_{g^*_t}\left. \delta_{g^{(2)}}(t)\right\vert_{0}^{T_f}-
\frac{d}{dt}\left( \left.\frac{\partial V(g^{(3)}(t),t)}{\partial g^{(3)}(t)}\right\vert_{g^*_t}\right)\left. \delta_{g^{(1)}}(t)\right\vert_{0}^{T_f} \nonumber \\
&+ \frac{d^2}{dt^2}\left(\left.\frac{\partial V(g^{(3)}(t),t)}{\partial g^{(3)}(t)}\right\vert_{g^*_t}\right)\left. \delta_{g^{{}}}(t)\right\vert_{0}^{T_f}
-\int\limits_{0}^{T_f} \frac{d^3}{dt^3}\left(\left.  \frac{\partial V(g^{(3)}(t),t)}{\partial g^{(3)}(t)}\right\vert_{g^*_t}\right) \delta_{g}(t) \ dt.
\end{align*}}
From the fixed boundary conditions, $\delta_{g^{(2)}}(T_f)=\delta_{g^{(2)}}(0)=0$, $\delta_{g^{(1)}}(T_f)=\delta_{g^{(1)}}(0)=0$, $\delta_{g}(0)=0$. Thus, the first variation simplifies to
{\small
\begin{align}
\partial J&=  \frac{d^2}{dt^2}\left(\left.\frac{\partial V(g^{(3)}(t),t)}{\partial g^{(3)}(t)}\right\vert_{g^*_t}\right) \delta_{g}(T_f)
-\int\limits_{0}^{T_f} \frac{d^3}{dt^3}\left(\left.  \frac{\partial V(g^{(3)}(t),t)}{\partial g^{(3)}(t)}\right\vert_{g^*_t}\right) \delta_{g}(t) \ dt.
\label{eqn_f_variation}
\end{align}}
 By substituting $V(\dddot{g}(t),t)=\frac{{\dddot{g}}^2(t)}{2}$, the Euler-Lagrange equation \eqref{eqn_f_variation} simplifies to $g^{*(6)}(t)=0$. Then from the Euler-Lagrange equation, for some constants $c_1,c_2,c_3,c_4,c_5$ and $c_6$,
 \begin{align}
 g^*(t)&=c_1 t^5+c_2 t^4+c_3 t^3+c_4 t^2+c_5 t +c_6,  \label{eqn_g_c}\\
  g^{*(1)}(t)&=5c_1 t^4+4c_2 t^3+3c_3 t^2+2c_4 t+c_5, \label{eqn_g1_c}\\
    g^{*(2)}(t)&=20c_1 t^3+12c_2 t^2+6c_3 t+2c_4, \label{eqn_g2_c}\\
      g^{*(3)}(t)&=60c_1 t^2+24c_2 t+6c_3 , \label{eqn_g3_c}\\
      g^{*(4)}(t)&=120c_1 t+24c_2, \label{eqn_g4_c}\\
      g^{*(5)}(t)&=120c_1. \label{eqn_g5_c}
 \end{align}

The solutions for both the boundary value conditions \ref{bcond_c1} and \ref{bcond_c2} are as follows:

\noindent 1. Since $\partial J=0$ and $\delta_{g}(T_f)$ need not be zero, $\frac{d^2}{dt^2}\left(\left.\frac{\partial V(g^{(3)}(t),t)}{\partial g^{(3)}(t)}\right\vert_{g^*(t)}\right)=0$ in addition to the Euler-Lagrange equation. This simplifies to $g^{*(5)}(t)=0$ and   from \eqref{eqn_g5_c}, $c_1=0$. From the given boundary conditions at $t=0$ and \eqref{eqn_g_c}, \eqref{eqn_g1_c}, \eqref{eqn_g2_c} we get $c_6=g_0$, $c_5=\dot{g}_0$ and $c_4=0$ respectively. The boundary conditions at $t=T_f$ gives $0=12c_2T_f^2+6c_3T_f$ and $\dot{g}_f=4c_2T_f^3+3c_3T_f^2+\dot{g}_0$ which gives rise to this linear system of equations
\begin{align}
\left[\begin{matrix}
2T_f & 1\\
4T_f^3 & 3T_f^2
\end{matrix}\right]\left[\begin{matrix}
c_2\\ c_3
\end{matrix}\right]=\left[\begin{matrix}
0\\ \dot{g}_f-\dot{g}_0
\end{matrix}\right].
\end{align}
Solving this system of linear equations,
$c_2=-\frac{\dot{g}_f-\dot{g}_0}{2T_f^3},\ c_3=\frac{\dot{g}_f-\dot{g}_0}{T_f^2}$. Thus,
$g^*(t)=(\dot{g}_f-\dot{g}_0)\left(-\frac{t^4}{2T_f^3}+\frac{t^3}{T_f^2}\right)+\dot{g}_0t+g_0. 
$\\

\noindent 2. The second set of boundary conditions \ref{bcond_c2} represent a fixed end-time $T_f$ and fixed end state $g(T_f)=g_f$ problem. Thus at $t=0$ and $t=T_f$, \eqref{eqn_g_c}- \eqref{eqn_g5_c} result in the following Linear system of equations.
\begin{align}
\left[\begin{matrix}
 T_f^5  & T_f^4 &  T_f^3  &  T_f^2   &  T_f &  1\\
     0    &     0   &    0    &   0   &  0 &  1 \\
    5T_f^4  & 4T_f^3  &  3T_f^2  &   2T_f  &  1 &  0\\
      0    &   0    &    0    &   0   & 1 &  0\\
    20T_f^3 & 12T_f^2 &  6T_f    &    2  &  0 &  0\\
      0   &    0    &    0    &    2  &  0 &  0
\end{matrix}\right]\left[\begin{matrix}
c_1\\c_2\\ c_3\\ c_4\\ c_5\\c_6
\end{matrix}\right]=\left[\begin{matrix}
g_f\\g_0\\ \dot{g}_f\\ \dot{g}_0\\ \ddot{g}_f\\\ddot{g}_0
\end{matrix}\right]=\left[\begin{matrix}
g_f\\g_0\\0\\ 0\\ 0\\0
\end{matrix}\right].
\label{eqn_gf_Tf_system}
\end{align}
 Solving \eqref{eqn_gf_Tf_system}, gives \\
 $c_1=\frac{6(g_f-g_0)}{T_f^5},\ c_2= \frac{15(g_0-g_f)}{T_f^4},\ c_3=\frac{10(g_f-g_0)}{T_f^3},$
 $ c_4=c_5=0,\ c_6=g_0$

Thus,
$g^*(t)=(g_f-g_0)\left(10T_f^2-15T_ft+6t^2\right)\frac{t^3}{T_f^5}+g_0.
$\qed

\begin{lemma}
\label{clm_parabolic_lb}
For $u\in[\frac{-\pi}{2},\ \frac{\pi}{2}]$, $\sin^2 u \geq \frac{4}{\pi^2} u^2$
\end{lemma}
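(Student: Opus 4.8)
The plan is to reduce the squared inequality to Jordan's inequality $|\sin u|\ge \tfrac{2}{\pi}|u|$ and then prove the latter by a concavity (chord-below-graph) argument. First I would observe that both sides of the claimed inequality are nonnegative on $[-\tfrac{\pi}{2},\tfrac{\pi}{2}]$, and that $t\mapsto t^2$ is nondecreasing on $[0,\infty)$; hence it suffices to show $|\sin u|\ge \tfrac{2}{\pi}|u|$. Next, since $u\mapsto|\sin u|$ and $u\mapsto|u|$ are both even, it is enough to treat $u\in[0,\tfrac{\pi}{2}]$, where $\sin u\ge 0$, so the goal becomes the clean statement
\begin{equation}
\sin u \;\ge\; \frac{2}{\pi}\,u,\qquad u\in\Bigl[0,\tfrac{\pi}{2}\Bigr].
\label{eqn_jordan_core}
\end{equation}

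To establish \eqref{eqn_jordan_core}, I would use that $\sin$ is concave on $[0,\tfrac{\pi}{2}]$, since its second derivative $-\sin u$ is $\le 0$ there. A concave function on an interval lies on or above the chord joining the endpoint values; the chord through $(0,\sin 0)=(0,0)$ and $(\tfrac{\pi}{2},\sin\tfrac{\pi}{2})=(\tfrac{\pi}{2},1)$ is precisely the line $u\mapsto\tfrac{2}{\pi}u$. Therefore $\sin u\ge \tfrac{2}{\pi}u$ for all $u\in[0,\tfrac{\pi}{2}]$, which is \eqref{eqn_jordan_core}. Squaring both sides of \eqref{eqn_jordan_core} gives $\sin^2 u\ge \tfrac{4}{\pi^2}u^2$ on $[0,\tfrac{\pi}{2}]$, and the even-symmetry reduction extends this to all of $[-\tfrac{\pi}{2},\tfrac{\pi}{2}]$.

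There is essentially no serious obstacle here; the only point requiring a little care is justifying the chord inequality for concave functions (or, equivalently, checking that $h(u):=\sin u-\tfrac{2}{\pi}u$ satisfies $h(0)=h(\tfrac{\pi}{2})=0$ with $h''\le 0$, so $h$ cannot dip below $0$ on the interval). An alternative that avoids invoking concavity as a named fact is to show $u\mapsto \tfrac{\sin u}{u}$ is nonincreasing on $(0,\tfrac{\pi}{2}]$ — its derivative has the sign of $u\cos u-\sin u$, which is $\le 0$ since $\tan u\ge u$ — so the minimum of $\tfrac{\sin u}{u}$ on $(0,\tfrac{\pi}{2}]$ is attained at $u=\tfrac{\pi}{2}$ and equals $\tfrac{2}{\pi}$; either route closes the argument.
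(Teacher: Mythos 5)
Your proof is correct, and it takes a genuinely different route from the paper's. The paper works directly with the ratio $f(u)=\frac{\sin^2 u}{u^2}$ on the punctured interval, computes $f'(u)=\frac{\sin 2u}{u^3}(u-\tan u)$, and determines its sign by expanding $\tan u$ in its Maclaurin series for $\vert u\vert<\frac{\pi}{2}$; monotonicity of $f$ toward the endpoints, where $f=\frac{4}{\pi^2}$, then gives the bound, with the endpoint and zero cases handled separately. You instead factor the statement through Jordan's inequality $\vert\sin u\vert\ge\frac{2}{\pi}\vert u\vert$ (after the evenness and squaring reductions) and prove that by the chord-below-graph property of the concave function $\sin$ on $[0,\frac{\pi}{2}]$. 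Your route is more elementary and self-contained: it avoids the series expansion, needs no case split at $u=0$, and leans only on $\sin''=-\sin\le 0$. What the paper's computation buys is a slightly sharper qualitative statement — strict monotonicity of $\frac{\sin^2 u}{u^2}$, hence strict inequality away from $u=0,\pm\frac{\pi}{2}$ — though that strictness is not needed where the lemma is used. Note also that your parenthetical alternative (showing $\frac{\sin u}{u}$ is nonincreasing via $\tan u\ge u$) is essentially the paper's argument applied to $\frac{\sin u}{u}$ rather than its square, so the concavity argument is the one that genuinely departs from the paper.
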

\begin{proof}
Consider the following cases:\\
\noindent\textit{Case 1: }For $u=0,\pm \frac{\pi}{2}$, $\sin^2 u = \frac{4}{\pi^2}u^2$. Thus the claim holds true at these values.\\
\noindent\textit{Case 2: }For $u\in \left(\frac{-\pi}{2},\ 0\right)\cup \left(0,\ \frac{-\pi}{2}\right) $, let $f(u)=\frac{\sin^2 u}{u^2}$. By differentiating and simplifying the resultant expression, $f'(u)=\frac{\sin 2u}{u^3}(u-\tan u)$. Using  the infinite Taylor series for $\tan u=u+\frac{u^3}{3}+\frac{2u^5}{15}+...$ for $\vert u\vert < \frac{\pi}{2}$, $f'(u)=-\sin 2u\left(\frac{1}{3}+\frac{2u^2}{15}+...\right) \mbox{ for } \vert u\vert < \frac{\pi}{2}$.

Thus for $u\in\left(\frac{-\pi}{2},\ 0\right)$, $f'(u)>0$ which implies that $f(u)$ is monotone increasing and thus $f(u)>f\left(\frac{-\pi}{2}\right)$ which implies $\frac{\sin^2 u}{u^2}>\frac{4}{\pi^2}$ or $\sin^2 u>\frac{4}{\pi^2}u^2$.
Similarly for $u\in\left(0,\ \frac{\pi}{2}\right)$, $f'(u)<0$ which implies that $f(u)$ is monotone decreasing and thus $f(u)>f\left(\frac{\pi}{2}\right)$ which implies $\frac{\sin^2 u}{u^2}>\frac{4}{\pi^2}$ or $\sin^2 u>\frac{4}{\pi^2}u^2$. Hence, \textit{Case 1} and \textit{Case 2} considered together prove the claim.
\end{proof}

\noindent{\bf Proof of Proposition  \ref{prop_ellipse_avoid}:}
For any two agents $i,j$ in the formation, $\psi_c^j=\psi_c^i+\Delta^{ij}\psi \bmod{2\pi}$. From \eqref{eqn_dist_ij}, the Euclidean distance between agents $i$ and $j$ is given by $D_{ij}=2\sin(\frac{\Delta^{ij}\psi}{2})\sqrt{A^2\sin^2\left(\Psi_p-a_cs_c\right)+B^2\cos^2\left(\Psi_p+b_cs_c\right)}$ and $\Psi_p =\psi_c^i+\frac{\Delta^{ij}\psi}{2}$. From  Lemma \ref{clm_parabolic_lb} in the appendix, we get a parabolic lower bound for $A^2\sin^2(\psi_c^i+\frac{\Delta^{ij}\psi}{2}-a_cs_c)$ and $B^2\cos^2(\psi_c^i+\frac{\Delta^{ij}\psi}{2}+b_cs_c)$ as $\frac{4}{\pi^2}A^2u^2$ and $\frac{4}{\pi^2}B^2\left(u+(a_c+b_c)s_c- \frac{(2k-1)\pi}{2}\right)^2$ respectively, where $u=\psi_c^i+\frac{\Delta^{ij}\psi}{2}-a_cs_c$ for some $k\in{\Bbb N}$. Thus $D_{ij}^2>4\sin^2(\frac{\Delta^{ij}\psi}{2})f(u)$, where
$$
f(u)=\frac{4}{\pi^2}A^2u^2+\frac{4}{\pi^2}B^2\left(u+N_cs_c- \frac{(2k-1)\pi}{2}\right)^2.
$$
Solving $\frac{df(u)}{du}=0$ we get
$u^*=-\frac{B^2}{A^2+B^2}\left(N_cs_c- \frac{(2k-1)\pi}{2}\right)$ which is a minimizer as $f(u)$ is a sum of parabolic functions having positive coefficients. Thus if $D_{ij_{min}}$ is the minimum distance between the agents $i$ and $j$, then $D_{{ij}_{min}}^2\geq 4\sin^2(\frac{\Delta^{ij}{\psi}}{2})f(u^*)$. Substituting value of $u^*$  in this inequality and simplifying the resultant expression leads to
\begin{align}
D_{{ij}_{min}}\geq \frac{4}{\pi}\frac{ABN_c}{\sqrt{A^2+B^2}}\left\vert\sin\left(\frac{\Delta^{ij}\psi}{2}\right)\right\vert\Delta_s,
\label{eqn_dij_min_psi}
\end{align}
where $\Delta_s=\left\vert s_c- \frac{(2k-1)\pi}{2N_c}\right\vert$. Hence if we guarantee
\begin{align}\frac{4}{\pi}\frac{ABN_c}{\sqrt{A^2+B^2}}\left\vert\sin\left(\frac{\Delta^{ij}_{min}}{2}\right)\right\vert\Delta_s>2r_{dm}
\label{eqn_ellipse_col_free}
\end{align} where $\Delta^{ij}_{min}=\min_{t\in\Bbb {R}^+} \vert\Delta^{ij}\psi(t)\vert$                                                                                                                                                        , then we can ensure $D_{ij_{min}}> 2r_{dm}$ and the transition along the ellipse is guaranteed to be collision-free.
   Rearranging \eqref{eqn_ellipse_col_free} results in the following inequalities:
$s_c> s_{diag}(k)+ \delta_s$ and  $s_c< s_{diag}(k)- \delta_s $,
 where $s_{diag}(k)=\frac{(2k-1)\pi}{2N_c}$ for $k\in \Bbb N$ is the parameter value for which the agents lie on the the degenerate ellipse (or the diagonal line)  as shown in Fig.\ \ref{fig_lis_ellipse}, and $\delta_s=\frac{\pi}{2N_c}\frac{r_{dm}\sqrt{A^2+B^2}}{AB}\left\vert\sin\left(\frac{\Delta^{ij}_{min}}{2}\right)\right\vert^{-1}$.
Thus all ellipses corresponding to all the $s$ values satisfying
\begin{align}
s_c\not\in \cup_{k\in\Bbb N}\left(s_{diag}(k)-\delta_s ,\ s_{diag}(k)+\delta_s \right)\bmod{2\pi}
\label{eqn_s_avoid}
\end{align} are feasible for agent transitions.\qed
\\\\
\noindent{\bf Proof of Lemma  \ref{lem_ellipse_partitions}:} If $Q_1, \cdots, Q_{N+1}$ are $N + 1$ equi-spaced points on $C$, by pigeonhole principle, at least one of the segments $[P_i,\ P_{i+1})$ contains two $Q_j$'s. This proves the existence of at least one such  $Q_j, Q_{j+1}$ pair within a $[P_i,\ P_{i+1})$ segment (with $j\in\{1,...,N+1\}$).

  There are $N$  equi-spaced segments  of the form $[P_{i},\  P_{i+1})$ which are disjoint and cover $C$, so if two of them contain a pair of $Q_j$'s each, the remaining $N-3$ of $Q_j$'s are distributed among the remaining $N-2$  segments, thus at least one of them does not contain any $Q_j$ points. Suppose the segment $[P_{i'}\ P_{i'+1})$  for some $i' \in \{1,...,N\}$, doesn't contain a $Q_j$, then this segment in turn is contained within a $[Q_{j'},\ Q_{j'+1})$ segment for some $j'\in\{1,...,N+1\}$. But if the parametric curve length of $C$ is $L$, then $[P_{i'}\ P_{i'+1})$ is of length $\frac{L}{N}$ and $[Q_{j'},\ Q_{j'+1})$ is of length $\frac{L}{N+1}$ and $[P_{i'}\ P_{i'+1})\not\subset [Q_{j'},\ Q_{j'+1})$, which is a contradiction. Thus there is exactly one $Q_j$ pair contained in a  $[P_i,\ P_{i+1})$ segment.\qed
\bibliographystyle{elsarticle-harv}
\bibliography{References}
\end{document}